\documentclass{article}
\pdfoutput=1



     \usepackage[preprint]{neurips_2023}



\usepackage[utf8]{inputenc} 
\usepackage[T1]{fontenc}    
\usepackage{hyperref}       
\usepackage{url}            
\usepackage{booktabs}       
\usepackage{amsfonts}       
\usepackage{nicefrac}       
\usepackage{microtype}      
\usepackage{xcolor}         

\usepackage{amsmath, graphicx, enumitem, titlesec, comment, appendix, amsthm}
\usepackage[ruled]{algorithm2e} 

\SetKwComment{Comment}{/* }{ */}

\newcommand{\vect}[1]{\mathbf{#1}}

\newcommand{\RR}{\mathbb{R}}

\newcommand{\PP}{\mathbb{P}}
\newcommand\norm[1]{\left\lVert#1\right\rVert}
\newcommand\abs[1]{\left\lvert#1\right\rvert}

\newcommand\parens[1]{\left(#1\right)}

\newcommand*{\pr}[2][]{\PP\ifx\\\left[#1\right]\\\else_{#1}\fi \left[#2\right]}
\newcommand*{\EE}[2][]{\mathbb{E}\ifx\\\left[#1\right]\\\else_{#1}\fi \left[#2\right]}

\newtheorem{theorem}{Theorem}[section]
\newtheorem{lemma}[theorem]{Lemma}
\newtheorem{proposition}[theorem]{Proposition}
\newtheorem{corollary}[theorem]{Corollary}

\theoremstyle{definition}
\newtheorem{definition}[theorem]{Definition}

\newtheorem{example}[theorem]{Example}

\title{No-Regret Learning with Unbounded Losses: The Case of Logarithmic Pooling}

%

\author{%
Eric Neyman\\
Columbia University\\
New York, NY 10027\\
\texttt{eric.neyman@columbia.edu}
\And
Tim Roughgarden\\
Columbia University\\
New York, NY 10027\\
\texttt{tim.roughgarden@gmail.com}
}

\begin{document}

\maketitle

\begin{abstract}
For each of $T$ time steps, $m$ experts report probability distributions over $n$ outcomes; we wish to learn to aggregate these forecasts in a way that attains a no-regret guarantee. We focus on the fundamental and practical aggregation method known as \emph{logarithmic pooling} --- a weighted average of log odds --- which is in a certain sense the optimal choice of pooling method if one is interested in minimizing log loss (as we take to be our loss function). We consider the problem of learning the best set of parameters (i.e.\ expert weights) in an online adversarial setting.  We assume (by necessity) that the adversarial choices of outcomes and forecasts are consistent, in the sense that experts report calibrated forecasts. Imposing this constraint creates a (to our knowledge) novel semi-adversarial setting in which the adversary retains a large amount of flexibility. In this setting, we present an algorithm based on online mirror descent that learns expert weights in a way that attains $O(\sqrt{T} \log T)$ expected regret as compared with the best weights in hindsight.
\end{abstract}

\section{Introduction}
\subsection{Logarithmic pooling} \label{sec:log_pooling}
Suppose that $m$ experts report probability distributions $\vect{p}^1, \dots, \vect{p}^m \in \Delta^n$ over $n$ disjoint, exhaustive outcomes. We are interested in aggregating these distributions into a single distribution $\vect{p}^*$, a task known as probabilistic \emph{opinion pooling}. Perhaps the most straightforward way to do this is to take the arithmetic mean, also called the linear pool: $\vect{p}^* = \frac{1}{m} \sum_{i = 1}^m \vect{p}^i$.

While commonly studied and frequently used, the linear pool is by no means the definitive opinion pooling method. The choice of pooling method ought to depend on context, and in particular on the loss function with respect to which forecasts are assessed. \citet{nr23} showed a correspondence between proper loss functions and opinion pooling methods, which they termed \emph{quasi-arithmetic (QA) pooling} with respect to a loss function. Specifically, the QA pool with respect to a loss is the forecast that guarantees the largest possible overperformance (as judged by the loss function) compared with the strategy of choosing a random expert to trust.\footnote{More formally, the QA pool maximizes the minimum (over possible outcomes) improvement in the loss.}

In this work we will be using log loss function. The QA pooling technique with respect to the log loss is known as \emph{logarithmic pooling}. Instead of averaging the experts' forecasts, logarithmic pooling averages the experts' log odds (i.e.\ logits). Put otherwise, the logarithmic pool is defined by
\[p^*_j = c \prod_{i = 1}^m (p^i_j)^{1/m}\]
for all events $j \in [n]$, where $c$ is a normalizing constant to ensure that the probabilities add to $1$. While the linear pool is an arithmetic mean, the logarithmic pool behaves much more like a geometric mean. For instance, the logarithmic pool of $(0.001, 0.999)$ and $(0.5, 0.5)$ with equal weights is approximately $(0.03, 0.97)$.

The logarithmic pool has been studied extensively because of its naturalness \citep{gen84, gz86, gr99, pr00, kr08, rmp12, acr12}. Logarithmic pooling can be interpreted as averaging experts' Bayesian evidence \cite[\S A]{nr23}. It is also the most natural pooling method that satisfies \emph{external Bayesianality}, meaning roughly that it does not matter whether a Bayesian update is applied to each expert's forecast before aggregation, or if instead the Bayesian update is applied to the pooled forecast \citep{gen84}. Logarithmic pooling can also be characterized as the pooling method that minimizes the average KL divergence to the experts' reports \citep{abbas09}. Finally, empirical work has found logarithmic pooling performs very well on real-world data \citep{sbfmtu14, sevilla21}.

\subsection{Logarithmic pooling with weighted experts}
Forecast aggregators often assign different weights to different experts, e.g.\ based on each expert's level of knowledge or track record \citep{tg15}. There is a principled way to include weights $w_1, \dots, w_i$ (summing to $1$) in the logarithmic pool, namely:
\[p^*_j(\vect{w}) = c(\vect{w}) \prod_{i = 1}^m (p^i_j)^{w_i},\]
where $c(\vect{w})$ is again a normalizing constant that now depends on the weights.\footnote{Were it not for the normalizing constant, the log loss incurred by the logarithmic pool $p^*_j(\vect{w})$ would be a linear function of the weights. However, the $c(\vect{w})$ term significantly complicates this picture.} This more general notion continues to have all of the aforementioned natural properties.

The obvious question is: how does one know what these weights should be? Perhaps the most natural answer is to weight experts according to their past performance. Finding appropriate weights for experts is thus an online learning problem. This learning problem is the focus of our work.

\subsection{Choosing the right benchmark}	
Our goal is to develop an algorithm for learning weights for logarithmic pooling in a way that achieves vanishing regret as judged by the log loss function (i.e.\ the loss function most closely associated with this pooling method). Within the field of online prediction with expert advice, this is a particularly challenging setting. In part, this is because the losses are potentially unbounded. However, that is not the whole story: finding weights for \emph{linear} pooling, by contrast, is a well-studied problem that has been solved even in the context of log loss. On the other hand, because logarithmic pooling behaves more as a geometric than an arithmetic mean, if some expert assigns a very low probability to the eventual outcome (and the other experts do not) then the logarithmic pool will also assign a low probability, incurring a large loss. This makes the combination of logarithmic pooling with log loss particularly difficult.

We require that our algorithm not have access to the experts' forecasts when choosing weights: an algorithm that chooses weights in a way that depends on forecasts can output an essentially arbitrary function of the forecasts, and thus may do something other than learn optimal weights for logarithmic pooling. For example, suppose that $m = n = 2$ and an aggregator wishes to subvert our intentions and take an equally weighted \emph{linear} pool of the experts' forecasts. Without knowing the experts' forecasts, this is impossible; on the other hand, if the aggregator knew that e.g.\ $\vect{p}_1 = (90\%, 10\%)$ and $\vect{p}_2 = (50\%, 50\%)$, they could assign weights for logarithmic pooling so as to produce the post-hoc desired result, i.e.\ $(70\%, 30\%)$. We wish to disallow this.

One might suggest the following setup: at each time step, the algorithm selects weights for each expert. Subsequently, an adversary chooses each expert's forecast and the outcome, after which the algorithm and each expert incur a log loss. Unfortunately --- due to the unboundedness of log loss and the behavior of logarithmic pooling --- vanishing regret guarantees in this setting are impossible.

\begin{example} \label{example:bad_regret}
	Consider the case of $m = n = 2$. Without loss of generality, suppose that the algorithm assigns Expert 1 a weight $w \ge 0.5$ in the first time step. The adversary chooses reports $(e^{-T}, 1 - e^{-T})$ for Expert 1 and $\parens{\frac{1}{2}, \frac{1}{2}}$ for Expert 2, and for Outcome 1 to happen. The logarithmic pool of the forecasts turns out to be approximately $(e^{-wT}, 1 - e^{-wT})$, so the algorithm incurs a log loss of approximately $wT \ge 0.5 T$, compared to $O(1)$ loss for Expert 2. On subsequent time steps, Expert 2 is perfect (assigns probability $1$ to the correct outcome), so the algorithm cannot catch up.
\end{example}

What goes wrong in Example~\ref{example:bad_regret} is that the adversary has full control over experts' forecast \emph{and} the realized outcome, and is not required to couple the two in any way. This unreasonable amount of adversarial power motivates assuming that the experts are \emph{calibrated}: for example, if an expert assigns a 10\% chance to an outcome, there really is a 10\% chance of that outcome (conditional on the expert's information).

We propose the following setting: an adversary chooses a joint probability distribution over the experts' beliefs and the outcome --- subject to the constraint that each expert is calibrated. The adversary retains full control over correlations between forecasts and outcomes, subject to the calibration property. Subsequently, nature randomly samples each expert's belief and the eventual outcome from the distribution. In this setting, we seek to prove upper bounds on the expected value of our algorithm's regret.

Why impose this constraint, instead of a different one? Our reasons are twofold: theoretical and empirical. From a theoretical standpoint, the assumption that experts are calibrated is natural because experts who form Bayesian rational beliefs based on evidence will be calibrated, regardless of how much or how little evidence they have. The assumption is also motivated if we model experts as learners rather than Bayesian agents: even if a forecaster starts out completely uninformed, they can quickly become calibrated in a domain simply by observing the frequency of events \citep{fv97}.

Second, recent work has shown that modern deep neural networks are calibrated when trained on a proper loss function such as log loss. This is true for a variety of tasks, including image classification \citep{minderer21, hendrycks20} and language modeling \citep{kadavath22, dd20, openai23}; see \citep{bghn23} for a review of the literature. We may wish to use an ensemble of off-the-shelf neural networks for some prediction or classification task. If we trust these networks to be calibrated (as suggested by recent work), then we may wish to learn to ensemble these experts (models) in a way that has strong worst-case theoretical guarantees under the calibration assumption.

Logarithmic pooling is particularly sensible in the context of calibrated experts because it takes confident forecasts ``more seriously" as compared with linear pooling (simple averaging). If Expert 1 reports probability distribution $(0.1\%, 99.9\%)$ over two outcomes and Expert 2 reports $(50\%, 50\%)$, then the logarithmic pool (with equal weights) is approximately $(3\%, 97\%)$, as compared with a linear pool of roughly $(25\%, 75\%)$. If Expert 1 is calibrated (as we are assuming), then the $(0.1\%, 99.9\%)$ forecast entails very strong evidence in favor of Outcome 2 over Outcome 1. Meanwhile, Expert 2's forecast gives no evidence either way. Thus, it is sensible for the aggregate to point to Outcome 2 over Outcome 1 with a fair amount of confidence.

As another example, suppose that Expert 1 reports $(0.04\%, 49.98\%, 49.98\%)$ and Expert 2 reports $(49.98\%, 0.04\%, 49.98\%)$ (a natural interpretation: Expert 1 found strong evidence against Outcome 1 and Expert 2 found strong evidence against Outcome 2). If both experts are calibrated, a sensible aggregate should arguably assign nearly all probability to Outcome 3. Logarithmic pooling returns roughly $(2.7\%, 2.7\%, 94.6\%)$, which (unlike linear pooling) accomplishes this.

Since we are allowing our algorithm to learn the optimal logarithmic pool, perhaps there is hope to compete not just with the best expert in hindsight, but the optimally weighted logarithmic pool of experts in hindsight. We will aim to compete with this stronger benchmark.

This paper demonstrates that the ``calibrated experts" condition allows us to prove regret bounds when no such bounds are possible for an unrestricted adversary. While that is our primary motivation, the relaxation may also be of independent interest. For example, even in settings where vanishing regret is attainable in the presence of an unrestricted adversary, even stronger regret bounds might be achievable if calibration is assumed.

\subsection{Our main result}
Is vanishing regret possible in our setting? Our main result is that the answer is yes. We exhibit an algorithm that attains expected regret that scales as $O(\sqrt{T} \log T)$ with the number of time steps $T$. Our algorithm uses online mirror descent (OMD) with the \emph{Tsallis entropy regularizer} $R(\vect{w}) = \frac{-1}{\alpha}(w_1^\alpha + \dots + w_m^\alpha)$ and step size $\eta \approx \frac{1}{\sqrt{T} \ln T}$, where any choice of $\alpha \in (0, 1/2)$ attains the regret bound.

Our proof has two key ideas. One is to use the calibration property to show that the gradient of loss with respect to the weight vector is likely to be small (Section~\ref{sec:finishing_up}). This is how we leverage the calibration property to turn an intractable setting into one where --- despite the unboundedness of log loss and the behavior of logarithmic pooling --- there is hope for vanishing regret.

The other key idea (Section~\ref{sec:regret_sga}) involves keeping track of a function that, roughly speaking, reflects how much ``regret potential" the algorithm has. We show that if the aforementioned gradient updates are indeed small, then this potential function decreases in value at each time step. This allows us to upper bound the algorithm's regret by the initial value of the potential function.

This potential argument is an important component of the proof. A na\"{i}ve analysis might seek to use our bounds on the gradient steps to myopically bound the contribution to regret at each time step. Such an analysis, however, does not achieve our $O(\sqrt{T} \log T)$ regret bound. In particular, an adversary can force a large accumulation of regret if some experts' weights are very small (specifically by making the experts with small weights more informed than those with large weights) --- but by doing so, the small weights increase and the adversary ``spends down" its potential. Tracking this potential allows us to take this nuance into consideration, improving our bound.

We extend our main result by showing that the result holds even if experts are only approximately calibrated: so long as no expert understates the probability of an outcome by more than a constant factor, we still attain the same regret bound (see Corollary~\ref{cor:approx_calib}). We also show in Appendix~\ref{appx:lower_bound} that no OMD algorithm with a constant step size can attain expected regret better than $\Omega(\sqrt{T})$.


\section{Related work} \label{sec:related_work}
\subsection{Probabilistic opinion pooling}
There has been substantial mathematical work on probabilistic opinion pooling (i.e.\ forecast aggregation) since the 1980s. One line of work is axiomatic in nature: motivating opinion pooling methods by describing axioms that they satisfy. For example, logarithmic pooling satisfies unanimity preservation and external Bayesianality \citep{gen84}. There has also been work on Bayesian approaches to pooling, e.g.\ under the assumption that experts' signals are drawn from some parameterized class of distributions \citep{winkler81, lgjw17}.

\citet{nr23} show that every proper loss function has an associated pooling method (the QA pool with respect to the loss function), which is the forecast that guarantees the largest possible overperformance (as judged by the loss function) compared with the strategy of choosing a random expert to trust. This mapping is a bijection: a pooling method is QA pooling with respect to some proper loss function if and only if it satisfies certain natural axioms.

\subsection{Online prediction with expert advice}
In the subfield of \emph{prediction with expert advice}, for $T$ time steps, experts report ``predictions" from a decision space $\mathcal{D}$ (often, as in our case, the space of probability distributions over a set of outcomes). A forecaster must then output their own prediction from $\mathcal{D}$. Then, predictions are assessed according to a loss function. See \citet{cl06} for an survey of this field.

We are particularly interested in \emph{mixture forecasters}: forecasters who, instead of choosing an expert to trust at each time step, aggregate the expert' reports. Linear mixtures, i.e.\ convex combinations of predictions, have been especially well-studied, generally with the goal of learning weights for the convex combination to compete with the best weights in hindsight. Standard convex optimization algorithms achieve $O(\sqrt{T})$ regret for bounded, convex losses, but it is sometimes possible to do better. For example, if the loss function is bounded and exp-concave, then logarithmic regret in $T$ is attainable \citep[\S3.3]{cl06}.

Portfolio theory studies optimal stock selection for maximizing return on investment, often in a no-regret setting. \citet{cover91} introduced the ``universal portfolio" algorithm, which, for each of $T$ time steps, selects a portfolio (convex combination of stocks). Phrased in our terms, Cover uses the log loss function and shows that the universal portfolio algorithm attains no regret compared with the best portfolio in hindsight. \citet{vw98} explored a different setting, in which experts recommend portfolios, and the goal is to compete with the best weight of experts in hindsight. They showed that their ``aggregating algorithm" attains logarithmic regret in $T$ in this setting. In our terms, they showed that logarithmic regret (for log loss) is attainable for the linear pooling of experts. We refer the reader to \citep{lh14} for a survey of this area.

To our knowledge, learning weights for \emph{logarithmic} pooling has not been previously studied. As shown in Example~\ref{example:bad_regret}, it is not possible to achieve vanishing regret if the setting is fully adversarial. We relax our setting by insisting that the experts be calibrated (see Section~\ref{sec:calibration}). To our knowledge, online prediction with expert advice has also not previously been studied under this condition.

\section{Preliminaries} \label{sec:prelims}
\subsection{Calibration property} \label{sec:calibration}
We define calibration as follows. Note that the definition is in the context of our setting, i.e.\ $m$ experts reporting probability distributions $\vect{p}^1, \dots, \vect{p}^m$ over $n$ outcomes. We will use $J$ to denote the random variable corresponding to the outcome, i.e.\ $J$ takes values in $[n]$.

\begin{definition}
	Consider a joint probability distribution $\PP$ over experts' reports and the outcome.\footnote{An equivalent, more mechanistic formulation views $\PP$ instead as a joint probability distribution over \emph{signals} received by each expert (i.e.\ the expert's private knowledge) and the outcome. Such a probability distribution is known as an \emph{information structure}, see e.g.\ \citep{bm19}. Each expert computes their belief from their signal.} We say that expert $i$ is \emph{calibrated} if for all $\vect{p} \in \Delta^n$ and $j \in [n]$, we have that
	\[\pr{J = j \mid \vect{p}^i = \vect{p}} = p_j.\]
	That is, expert $i$ is calibrated if the probability distribution of $J$ conditional on their report $\vect{p}^i$ is precisely $\vect{p}^i$. We say that $\PP$ satisfies the \emph{calibration property} if every expert is calibrated.
\end{definition}

The key intuition behind the usefulness of calibration is that if an expert claims that an outcome is very unlikely, this is strong evidence that the outcome is in fact unlikely. In Section~\ref{sec:finishing_up} we will use the calibration property to show that the gradient of the loss with respect to the weight vector is likely to be relatively small at each time step.

\subsection{Our online learning setting} \label{sec:our_setting}
The setting for our online learning problem is as follows. For each time step $t \in [T]$:
\begin{enumerate}[label=(\arabic*)]
	\item Our algorithm reports a weight vector $\vect{w}^t \in \Delta^m$.
	\item An adversary (with knowledge of $\vect{w}^t$) constructs a probability distribution $\PP$, over reports and the outcome, that satisfies the calibration property.
	\item \label{item:sampling} Reports $\vect{p}^{t, 1}, \dots, \vect{p}^{t, m}$ and an outcome $j$ are sampled from $\PP$.
	\item The \emph{loss} of a weight vector $\vect{w}$ is defined as $L^t(\vect{w}) := -\ln(p^*_j(\vect{w}))$, the log loss of the logarithmic pool of $\vect{p}^{t, 1}, \dots, \vect{p}^{t, m}$ with weights $\vect{w}$. Our algorithm incurs loss $L^t(\vect{w}^t)$.
\end{enumerate}

We define the \emph{regret} of our algorithm as
\[\text{Regret} = \sum_{t = 1}^T L^t(\vect{w}^t) - \min_{\vect{w} \in \Delta^m} \sum_{t = 1}^T L^t(\vect{w}).\]
That is, the benchmark for regret is the best weight vector in hindsight. Since our setting involves randomness, our goal is to provide an algorithm with vanishing \emph{expected} regret against any adversarial strategy, where the expectation is taken over the sampling in Step~\ref{item:sampling}.

Even subject to the calibration property, the adversary has a large amount of flexibility, because the adversary retains control over the correlation between different experts' forecasts. An unrestricted adversary has exponentially many degrees of freedom (as a function of the number of experts), whereas the calibration property imposes a mere linear number of constraints.\footnote{This follows from the perspective of the adversary choosing an information structure from which experts' signals are drawn, as also alluded to in the previous footnote. The information structure specifies the probability of every possible combination of signals received by the experts, and thus has dimension that is exponential in the number of experts. The calibration property imposes linearly many constraints on this space.}

\subsection{Our algorithm}
We use Algorithm~\ref{alg:omd} to accomplish this goal. The algorithm is online mirror descent (OMD) on the weight vector. Fix any $\alpha \in (0, 1/2)$. We use the regularizer
\[R(\vect{w}) := \frac{-1}{\alpha}(w_1^\alpha + \dots + w_m^\alpha).\]
This is known as the Tsallis entropy regularizer; see e.g.\ \citep{zs21} for previous use in the online learning literature. We obtain the same result (up to a multiplicative factor that depends on $\alpha$) regardless of the choice of $\alpha$. Because no choice of $\alpha$ stands out, we prove our result for all $\alpha \in (0, 1/2)$ simultaneously.

We will generally use a step size $\eta = \frac{1}{\sqrt{T} \ln T} \cdot \frac{1}{12m^{(1 + \alpha)/2} n}$. However, in the (unlikely, as we show) event that some expert's weight becomes unusually small, we will reduce the step size.

\begin{algorithm}
	\caption{OMD algorithm for learning weights for logarithmic pooling} \label{alg:omd}
	$R(\vect{w}) := \frac{-1}{\alpha}(w_1^\alpha + \dots + w_m^\alpha)$ \Comment*[r]{Any $\alpha \in (0, 1/2)$ will work}
	$\eta \gets \frac{1}{\sqrt{T} \ln T} \cdot \frac{1}{12m^{(1 + \alpha)/2} n}$\;
	$\vect{w}^1 \gets (1/m, \dots, 1/m)$\;
	\For{$t = 1$ to $T$}{
		\eIf{$\eta \le \min_i ((w_i^t)^\alpha)$}
		{$\eta_t \gets \min(\eta_{t - 1}, \eta)$\;}
		{$\eta_t \gets \min(\eta_{t - 1}, \min_i w_i^t)$ \Comment*[r]{Edge case; happens with low probability}}
		Observe loss function $L^t$ \Comment*[r]{$L^t$ is chosen as described in Section~\ref{sec:our_setting}}
		Define $\vect{w}^{t + 1}$ such that $\nabla R(\vect{w}^{t + 1}) = \nabla R(\vect{w}^t) - \eta_t \nabla L^t(\vect{w}^t)$\;}
\end{algorithm}

In Appendix~\ref{appx:omitted_proofs}, we prove that Algorithm~\ref{alg:omd} is efficient, taking $O(mn)$ time per time step.

Theorem~\ref{thm:no_regret} formally states our no-regret guarantee.
\begin{theorem} \label{thm:no_regret}
	For any adversarial strategy, the expected regret\footnote{The given asymptotics assume that $T \gg m, n$, i.e.\ ignore terms that are lower-order in $T$.} of Algorithm~\ref{alg:omd} is at most
	\[O \parens{m^{(3 - \alpha)/2} n \sqrt{T} \log T}.\]
\end{theorem}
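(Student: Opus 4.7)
The plan is to combine the standard online mirror descent regret analysis with two problem-specific ingredients: a calibration-based bound on the expected size of the per-step gradient in the local norm induced by $R$, and a potential-function refinement that prevents a lossy per-step expectation.

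First I would differentiate $L^t(\vect w) = -\sum_i w_i \ln p^{t,i}_j + \ln \sum_k \prod_i (p^{t,i}_k)^{w_i}$ to obtain
\[\frac{\partial L^t}{\partial w_i}(\vect{w}) \;=\; -\ln p^{t,i}_j \,+\, \sum_k p^*_k(\vect{w}) \ln p^{t,i}_k,\]
and check convexity of $L^t$ in $\vect w$ via log-convexity of the normalizer. The Hessian $\nabla^2 R(\vect w) = (1-\alpha)\operatorname{diag}(w_i^{\alpha-2})$ induces the local dual norm $\|\vect g\|_{*,t}^2 = \tfrac{1}{1-\alpha}\sum_i g_i^2 (w_i^t)^{2-\alpha}$, and the textbook OMD bound yields
\[\sum_{t=1}^{T}\bigl(L^t(\vect w^t) - L^t(\vect w^*)\bigr) \;\le\; \frac{D_R(\vect w^*,\vect w^1)}{\eta} \;+\; \frac{\eta}{2(1-\alpha)}\sum_{t=1}^{T}\sum_{i=1}^{m}(g_i^t)^2 (w_i^t)^{2-\alpha}.\]
Since $\vect w^1$ is uniform, $D_R(\vect w^*,\vect w^1) = O(m^{1-\alpha}/\alpha)$, so the first term is $O(m^{1-\alpha}/(\eta\alpha))$.

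Next, I would use calibration to control the stability sum in expectation. Calibration gives $J \mid \vect p^{t,i} \sim \vect p^{t,i}$, so although $\ln p^{t,i}_J$ is itself unbounded,
\[\EE{(\ln p^{t,i}_J)^2 \mid \vect p^{t,i}} \;=\; \sum_k p^{t,i}_k (\ln p^{t,i}_k)^2\]
is bounded in terms of $n$ alone (since $x(\ln x)^2$ is bounded on $[0,1]$). After expanding $(g_i^t)^2$ and bounding the cross-entropy term by similar calibration-based moment bounds, the expected per-coordinate squared gradient is polynomial in $m,n$. Combined with $\sum_i (w_i^t)^{2-\alpha}\le 1$ on the simplex (that function is convex in each coordinate and hence maximized at vertices), this gives a per-step expected stability bound polynomial in $m,n$.

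The crux of the argument is that this myopic per-step expectation is lossy for the reason flagged in the introduction: an adversary can concentrate a very large gradient onto a rare step where some $w_i^t$ is small, but doing so simultaneously inflates that weight via the OMD update and depletes future adversarial capacity. I would capture this by introducing a potential function on the iterate, morally $\Phi(\vect w) = \tfrac{c}{\alpha}\sum_i w_i^\alpha$ (a rescaled $-R$), and proving by direct manipulation of the OMD update that the expected sum of the per-step regret contribution and the change $\Phi(\vect w^{t+1}) - \Phi(\vect w^t)$ is non-positive conditional on $\vect w^t$. Telescoping then bounds total expected regret by $\Phi(\vect w^1) + D_R(\vect w^*,\vect w^1)/\eta$ up to a controllable tail, and substituting $\eta = 1/(\sqrt{T}\ln T)\cdot 1/(12 m^{(1+\alpha)/2} n)$ balances the two contributions to $O(m^{(3-\alpha)/2} n \sqrt{T}\log T)$. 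The main obstacle is the design and verification of $\Phi$ and the per-step inequality: choosing constants so that the expected decrement of $\Phi$, combined with the calibration-based expected gradient bound, exactly absorbs the ``weight-inflation'' contribution to the stability sum, uniformly in $\alpha\in(0,1/2)$.

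Finally, the edge case $\eta>\min_i(w_i^t)^\alpha$, in which the prescribed step-size reduction $\eta_t\gets\min(\eta_{t-1},\min_i w_i^t)$ activates, occurs with low probability under the same potential argument and contributes only a lower-order additive term; this bookkeeping step closes out the proof.
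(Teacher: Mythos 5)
Your high-level architecture matches the paper's (gradient formula, convexity, an OMD regret bound, calibration to tame the gradients, a potential function to handle small weights, separate bookkeeping for the low-probability bad case), but two of your central claims have genuine gaps. First, calibration does \emph{not} give a weight-independent $\mathrm{poly}(m,n)$ bound on $\mathbb{E}\brackets{(g_i^t)^2}$. Calibration controls the term $-\ln p^{t,i}_J$, but the cross-entropy term $\sum_k p^*_k(\vect{w})\ln p^{t,i}_k$ does not involve the realized outcome at all, and it can be of order $-1/w_i$ on an event of \emph{constant} probability: e.g.\ with $m=n=2$ and $w_i$ small, let expert $i$ report $(e^{-1/w_i},1-e^{-1/w_i})$ with probability about $1/4$ and $(1,0)$ otherwise, while the other expert reports $(3/4,1/4)$; both experts are calibrated, yet on the first event the pooled forecast puts constant mass on outcome $1$, so $\partial_i L \approx -\tfrac{1}{2w_i}$. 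Hence $\mathbb{E}\brackets{(g_i^t)^2} = \Omega(1/w_i^2)$ and your stability term scales like $\eta\sum_i (w_i^t)^{-\alpha}$, not like $\mathrm{poly}(m,n)\,\eta$. What calibration actually buys (and what the paper proves, via the fact that it is unlikely that \emph{every} outcome has some expert deeming it improbable) is a tail bound of the form $\pr{\partial_i L(\vect{w}) \le -\zeta/w_i}\le mn^2e^{-\zeta/n}$, and the whole analysis must be organized around that asymmetric, $w_i$-dependent bound rather than around uniform moments.

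Second, and more importantly, the potential you propose, $\Phi(\vect{w})\propto \sum_i w_i^\alpha$, cannot absorb the resulting ``weight-inflation'' excess. When a coordinate with small weight is hit by a gradient of order $-\zeta/w_i^t$, the OMD update gives $w_i^{t+1}-w_i^t \approx \tfrac{\eta\zeta}{1-\alpha}(w_i^t)^{1-\alpha}$, so the excess regret contribution is of order $\eta\zeta^2 (w_i^t)^{-\alpha}$, which blows up as $w_i^t\to 0$; but the corresponding change in $\sum_i w_i^\alpha$ is only of order $\eta\zeta$, \emph{uniformly in} $w_i^t$ (indeed, since the update moves the weights toward uniform, $\sum_i w_i^\alpha$ typically \emph{increases} at exactly such steps, so with your stated sign the per-step inequality is violated rather than tight). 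No fixed rescaling $c$ of such a potential makes the per-step inequality hold uniformly in $w_i^t$. The paper's potential is instead logarithmic, $-4\gamma\sum_i \ln w_i^{t+1}$ with $\gamma = 12n\ln T$: its derivative scales as $1/w_i$, which exactly matches the $1/w_i$ scaling of the adversarial gradient lower bound, so the potential drop at a ``spike'' step is $\approx 4\gamma\,(w_i^{t+1}-w_i^t)/w_i^t \gtrsim \eta\gamma^2 (w_i^t)^{-\alpha}$ and cancels the spike, while its initial value $4\gamma m\ln m$ stays lower order. Making this work also requires the iterate bounds $(w_i^t)^\alpha\ge 4\eta\gamma$ and the two-sided control of $w_i^{t+1}-w_i^t$ (the paper's Lemma~4.4 and Corollary~4.5, proved by induction through the mirror-descent normalization), which you would likewise need in order to justify your local-norm stability bound; and the bad event where the $\zeta$-bound fails needs its own crude regret bound (polynomial in $\zeta$ and $T$) integrated against the exponential tail to show its contribution to expected regret is $\tilde{O}(T^{-1})$. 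As written, the step you yourself call the main obstacle --- the design and verification of the per-step potential inequality --- is both unproven and, with the functional form you chose, unprovable.
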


\section{Proof of no-regret guarantee} \label{sec:proof}
In this section, we prove Theorem~\ref{thm:no_regret}.

\subsection{Outline of proof} \label{sec:outline}
We use the following fact, which follows from a more general statement about how losses relate to their associated quasi-arithmetic pools.
\begin{proposition}[{Follows from \citep[Theorem 5.1]{nr23}}]
	Let $\vect{p}^1, \dots, \vect{p}^m$ be forecasts over $n$ outcomes, $j \in [n]$ be an outcome, and $\vect{w} \in \Delta^m$ be a weight vector. Let $\vect{p}^*(\vect{w})$ be the logarithmic pool of the forecasts with weight vector $\vect{w}$ and let $L(\vect{w}) := -\ln(p_j^*(\vect{w}))$ be the log loss of $\vect{p}^*(\vect{w})$ if Outcome $j$ is realized. Then $L$ is a convex function.
\end{proposition}
In particular, all of our loss functions $L^t$ are convex, which means that standard regret bounds apply. In particular, to bound the expected regret of Algorithm~\ref{alg:omd}, we will use a well-known regret bound for follow the regularized leader (FTRL) with linearized losses \citep[Lemma 5.3]{hazan21}, which in our case is equivalent to OMD.\footnote{This equivalence is due to our choice of regularizer, as we never need to project $\vect{w}^t$.}

\begin{lemma}[Follows from {\citep[Lemma 5.3]{hazan21}}] \label{lem:hazan_omd_bound}
	If $\eta_t = \eta$ for all $t$, the regret of Algorithm~\ref{alg:omd} is at most
	\[\frac{1}{\eta}\parens{\max_{\vect{w} \in \Delta^m} R(\vect{w}) - \min_{\vect{w} \in \Delta^m} R(\vect{w})} + \sum_{t = 1}^T \nabla L^t(\vect{w}^t) \cdot (\vect{w}^t - \vect{w}^{t + 1}).\]
\end{lemma}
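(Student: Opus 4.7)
The plan is to reduce the claimed bound to the standard FTRL-with-linearized-losses regret lemma of \cite{hazan21}, and then invoke it directly.

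First, I would linearize the regret. Fix any comparator $\vect{w}^* \in \Delta^m$ (in particular, take $\vect{w}^*$ to be a minimizer of $\sum_t L^t(\vect{w})$). By the preceding proposition, each $L^t$ is convex, so the gradient inequality gives
\[L^t(\vect{w}^t) - L^t(\vect{w}^*) \le \nabla L^t(\vect{w}^t) \cdot (\vect{w}^t - \vect{w}^*).\]
Summing over $t$ bounds the regret of Algorithm~\ref{alg:omd} by the linearized regret $\sum_{t=1}^T \nabla L^t(\vect{w}^t) \cdot (\vect{w}^t - \vect{w}^*)$, so it suffices to bound this quantity.

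Second, I would verify that running Algorithm~\ref{alg:omd} with constant step size $\eta_t = \eta$ is exactly FTRL on the linearized losses $\hat L^t(\vect{w}) := \nabla L^t(\vect{w}^t) \cdot \vect{w}$ with regularizer $\frac{1}{\eta} R$. Indeed, the unconstrained first-order optimality condition for $\vect{w}^{t+1} = \arg\min_{\vect{w}} \frac{1}{\eta} R(\vect{w}) + \sum_{s \le t} \nabla L^s(\vect{w}^s) \cdot \vect{w}$ reads $\nabla R(\vect{w}^{t+1}) = \nabla R(\vect{w}^t) - \eta \nabla L^t(\vect{w}^t)$, which is the update in Algorithm~\ref{alg:omd}. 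The equivalence relies on the fact that we never need a Bregman projection onto $\Delta^m$: because the logarithmic pool is invariant under uniform rescaling of the weights, each $L^t$ is scale-invariant, so that $\nabla L^t(\vect{w}^t) \cdot \vect{1} = 0$ and the mirror update preserves the simplex up to a harmless normalization. With the equivalence established, Lemma~5.3 of \cite{hazan21} applied to the linearized losses yields
\[\sum_{t=1}^T \nabla L^t(\vect{w}^t) \cdot (\vect{w}^t - \vect{w}^*) \le \frac{R(\vect{w}^*) - R(\vect{w}^1)}{\eta} + \sum_{t=1}^T \nabla L^t(\vect{w}^t) \cdot (\vect{w}^t - \vect{w}^{t+1}).\]

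Finally, since $\vect{w}^*, \vect{w}^1 \in \Delta^m$, we bound $R(\vect{w}^*) - R(\vect{w}^1) \le \max_{\vect{w} \in \Delta^m} R(\vect{w}) - \min_{\vect{w} \in \Delta^m} R(\vect{w})$, yielding the stated inequality. The only genuinely non-routine step is the justification in the second paragraph that OMD here coincides with FTRL-on-linearized-losses without any projection; once that is settled, the rest is bookkeeping on top of a cited lemma.
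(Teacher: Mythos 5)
Your overall route --- linearize via convexity, identify the constant-step-size run of Algorithm~\ref{alg:omd} with FTRL on the linearized losses, invoke Lemma~5.3 of \cite{hazan21}, and bound $R(\vect{w}^*)-R(\vect{w}^1)$ by the max-minus-min of $R$ over $\Delta^m$ --- is exactly the reduction the paper intends. However, the justification you give for the one step you yourself flag as non-routine is wrong. The logarithmic pool is \emph{not} invariant under uniform rescaling of the weights: with a single expert reporting $(p,1-p)$ and weight $w$, the pooled probability is $p^w/(p^w+(1-p)^w)$, which genuinely depends on the scale of $w$ (this is precisely why the weights are normalized to sum to $1$). Consequently $\nabla L^t(\vect{w}^t)\cdot\vect{1}=0$ is false in general: from Equation~\ref{eq:partl_l_expr}, $\sum_i \partial_i L^t(\vect{w}^t) = \sum_{\ell} p^*_\ell(\vect{w}^t)\sum_i \ln p^i_\ell - \sum_i \ln p^i_j$, which has no reason to vanish. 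Likewise, the ``unconstrained first-order optimality condition'' you write down ignores the constraint $\sum_i w_i = 1$; the unconstrained minimizer of $\frac{1}{\eta}R(\vect{w}) + \sum_{s\le t}\nabla L^s(\vect{w}^s)\cdot\vect{w}$ need not lie on the simplex.

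The correct justification is different. First, the inner products appearing in the bound, $\nabla L^t(\vect{w}^t)\cdot(\vect{w}^t-\vect{w}^{t+1})$ and $\nabla L^t(\vect{w}^t)\cdot(\vect{w}^t-\vect{w}^*)$, are unaffected by adding any multiple of $\vect{1}$ to the gradient, since differences of points of $\Delta^m$ sum to zero; so the gradient only matters modulo $T(\vect{1}_m)$, which is how the paper treats it. Second, the reason no Bregman projection is needed is the choice of regularizer: $\partial_i R(\vect{w}) = -w_i^{\alpha-1}$ diverges as $w_i\to 0^+$, so the nonnegativity constraints never bind, and the sum-to-one constraint is handled by the additive constant (equivalently, the Lagrange multiplier for $\sum_i w_i = 1$), whose existence and uniqueness is exactly the first claim of Lemma~\ref{lem:renormalize_bound}. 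With the update read modulo $\vect{1}$ and renormalized in this way, the OMD iterates coincide with the constrained-FTRL iterates on the linearized losses, and the citation of \cite[Lemma 5.3]{hazan21} goes through; your remaining bookkeeping is fine.
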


Informally, this bound means that if the vectors $\nabla L^t(\vect{w}^t)$ are small in magnitude, our regret is also small. Conversely, if some $\nabla L^t(\vect{w}^t)$ is large, this may be bad for our regret bound. We expect the gradient of the loss to be large if some expert is very wrong (assigns a very low probability to the correct outcome), since the loss would then be steeply increasing as a function of that expert's weight. Fortunately, the calibration property guarantees this to be unlikely. Specifically, we define the \emph{small gradient assumption} as follows.

\begin{definition} \label{def:sga}
	Define $\gamma := 12n \ln T$. The \emph{small gradient assumption} holds for a particular run of Algorithm~\ref{alg:omd} if for every $t \in [T]$ and $i \in [m]$, we have
	\[-\frac{\gamma}{w_i^t} \le \partial_i L^t(\vect{w}^t) \le \gamma,\]
	where $\partial_i$ denotes the partial derivative with respect to the $i$-th weight.\footnote{See Equation~\ref{eq:partl_l_expr} for an expression of this quantity in terms of the experts' reports and weights.}
\end{definition}

In Section~\ref{sec:finishing_up}, we prove that the small gradient assumption is very likely to hold. This is a key conceptual step in our proof, as it is where we leverage the calibration property to prove bounds that ultimately let us bound our algorithm's regret. We then use the low likelihood of the small gradient assumption failing in order to bound the contribution to the expected regret from the case where the assumption fails to hold.

In Sections~\ref{sec:sga} and \ref{sec:regret_sga}, we bound regret under the condition that the small gradient assumption holds. We show that under the assumption, for all $i, t$ we have $(w_i^t)^\alpha \ge \eta$. Consequently, $\eta = \frac{1}{\sqrt{T} \ln T} \cdot \frac{1}{12m^{(1 + \alpha)/2} n}$ at all time steps, so we can apply Lemma~\ref{lem:hazan_omd_bound}. The first term in the bound is $O(1/\eta) = O(\sqrt{T} \log T)$. As for the summation term, we upper bound it by keeping track of the following quantity:
\[\varphi(t) := \sum_{s = 1}^t \nabla L^s(\vect{w}^s) \cdot (\vect{w}^s - \vect{w}^{s + 1}) + 19m^2 \gamma^2 \eta(T - t) - 4 \gamma \sum_{i = 1}^m \ln w_i^{t + 1}.\]
The first term is exactly the summation in Lemma~\ref{lem:hazan_omd_bound} up through step $t$. The $19m^2 \gamma^2 \eta$ is something akin to an upper bound on the value of $L^t(\vect{w}^t) \cdot (\vect{w}^t - \vect{w}^{t + 1})$ at a given time step (times $T - t$ remaining time steps). This upper bound is not strict: in particular, large summands are possible when some weights are small (because of the fact that the lower bound in the small gradient assumption is inversely proportional to $w_i^t$). However, attaining a large summand requires these small weights to increase, thus ``spending potential" for future large summands. The last term keeps track of this potential.

We show that under the small gradient assumption, $\varphi(t)$ necessarily decreases with $t$. This argument, which we give in Section~\ref{sec:regret_sga}, is another key conceptual step, and is arguably the heart of the proof. Since $\varphi(T)$ is equal to the summation term in Lemma~\ref{lem:hazan_omd_bound} (plus a positive number), and $\varphi(0) \ge \varphi(T)$, the summation term is less than or equal to $\varphi(0)$, which is at most $O(m^{(3 - \alpha)/2} \sqrt{T} \log T)$. This completes the proof.

\subsection{Bounds on $\vect{w}^t$ under the small gradient assumption} \label{sec:sga}
In this section, we state bounds on expert weights and how quickly they change from one time step to the next, conditional on the small gradient assumption. We use the following lemma, whose proof we defer to Appendix~\ref{appx:omitted_proofs}.

\begin{lemma} \label{lem:omd_wi_bound}
	Consider a particular run of Algorithm~\ref{alg:omd}. Let $\zeta$ be a constant such that $-\frac{\zeta}{w_i^t} \le \partial_i L^t(\vect{w}^t) \le \zeta$ for all $i, t$. Then for every $i, t$, we have
	\[(w_i^t)^{\alpha - 1} - \parens{\frac{1}{w_i^t} + 1} \eta_t \zeta \le (w_i^{t + 1})^{\alpha - 1} \le (w_i^t)^{\alpha - 1} + \parens{\frac{1}{\min_k w_k} + 1} \eta_t\zeta.\]
	Furthermore, if $\eta_t \zeta \le (1 - \alpha)^2 (w_i^t)^\alpha$ for all $i$, then for every $i$ we have
	\[(w_i^{t + 1})^{\alpha - 1} \le (w_i^t)^{\alpha - 1} + (m + 1) \eta_t\zeta.\]
\end{lemma}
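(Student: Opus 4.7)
The plan is to spell out the additive shift $c_t$ that must accompany the mirror update in order for the iterate to remain on the simplex. By first-order optimality, $\vect{w}^{t+1}$ satisfies
\[(w_i^{t+1})^{\alpha-1} = (w_i^t)^{\alpha-1} + \eta_t \partial_i L^t(\vect{w}^t) + c_t\]
for every $i$, where $c_t$ is the unique scalar making $\sum_i w_i^{t+1} = 1$. All four inequalities of the lemma will follow from appropriate bounds on $c_t$ combined with the hypothesized one-sided bounds on $\partial_i L^t(\vect{w}^t)$.

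For the first pair of inequalities, I would bound $c_t$ by a direct contradiction argument. If $c_t > \eta_t \zeta / \min_k w_k^t$, then for every $i$, using $\partial_i L^t(\vect{w}^t) \ge -\zeta / w_i^t$ together with $w_i^t \ge \min_k w_k^t$,
\[(w_i^{t+1})^{\alpha-1} > (w_i^t)^{\alpha-1} - \eta_t \zeta / w_i^t + \eta_t \zeta / \min_k w_k^t \ge (w_i^t)^{\alpha-1},\]
which (since $\alpha - 1 < 0$) yields $w_i^{t+1} < w_i^t$ and hence $\sum_i w_i^{t+1} < 1$, a contradiction. Symmetrically, $c_t < -\eta_t \zeta$ combined with $\partial_i L^t(\vect{w}^t) \le \zeta$ would force every $w_i^{t+1} > w_i^t$ and is likewise impossible. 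Substituting these bounds $-\eta_t \zeta \le c_t \le \eta_t \zeta / \min_k w_k^t$ into the displayed expression for $(w_i^{t+1})^{\alpha-1}$ and applying the one-sided bounds on $\partial_i L^t$ yields the first pair of inequalities immediately.

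The refined upper bound under the smallness hypothesis cannot be obtained by the same contradiction strategy, because when the weights are unbalanced $c_t$ can genuinely be of order $m \eta_t \zeta$. Instead, I would use a first-order integral argument: for $s \in [0,1]$ define $\vect{w}(s)$ by $(w_i(s))^{\alpha-1} = (w_i^t)^{\alpha-1} + s \eta_t \partial_i L^t(\vect{w}^t) + c(s)$, with $c(s)$ chosen so that $\sum_i w_i(s) = 1$. Differentiating the constraint and solving for $c'(s)$ yields
\[c'(s) = -\eta_t \cdot \frac{\sum_i \partial_i L^t(\vect{w}^t) \, (w_i(s))^{2-\alpha}}{\sum_i (w_i(s))^{2-\alpha}}.\]
Combining $|\partial_i L^t(\vect{w}^t)| \le \zeta / w_i^t$ with Jensen's inequality—concavity of $x^{1-\alpha}$ giving $\sum_i (w_i)^{1-\alpha} \le m^\alpha$ and convexity of $x^{2-\alpha}$ giving $\sum_i (w_i)^{2-\alpha} \ge m^{\alpha-1}$—bounds $|c'(s)|$ by roughly $m \eta_t \zeta$, provided that $w_i(s)$ remains comparable to $w_i^t$ along the path. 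Integrating over $s \in [0,1]$ then gives $c_t \le m \eta_t \zeta$, and combining with $\eta_t \partial_i L^t \le \eta_t \zeta$ produces the claimed $(w_i^{t+1})^{\alpha-1} \le (w_i^t)^{\alpha-1} + (m+1) \eta_t \zeta$.

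The main technical obstacle is the uniform-in-$s$ comparability step. The smallness hypothesis $\eta_t \zeta \le (1-\alpha)^2 (w_i^t)^\alpha$ bounds each multiplicative perturbation $\eta_t \partial_i L^t(\vect{w}^t) / (w_i^t)^{\alpha-1}$ by $(1-\alpha)^2$ in absolute value, which should provide the slack needed to bootstrap closeness of $w_i(s)$ to $w_i^t$. Executing this bootstrap rigorously—so that the bound on $c(s)$ used to control $w_i(s)$ is not itself derived from an unproven bound—is where the bulk of the work lies.
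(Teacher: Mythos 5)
Your decomposition $(w_i^{t+1})^{\alpha-1} = (w_i^t)^{\alpha-1} + \eta_t\,\partial_i L^t(\vect{w}^t) + c_t$ and your treatment of the first pair of inequalities are correct and essentially the same as the paper's: the paper also isolates the normalizing shift and pins it down by the identical monotonicity-plus-contradiction argument (the first two bullets of Lemma~\ref{lem:renormalize_bound}), obtaining $-\eta_t\zeta \le c_t \le \eta_t\zeta/\min_k w_k^t$ and reading off the crude bounds.

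The refined bound is where your proposal has a genuine gap, and it sits exactly where the paper's key technical work lies. Your homotopy formula for $c'(s)$ is right, but the whole content of the claim is the ``uniform-in-$s$ comparability'' you defer, and it is not a routine bootstrap. Concretely: you need an upper bound $w_i(s) \le C\, w_i^t$, i.e.\ a lower bound on $(w_i(s))^{\alpha-1}$; the estimates available before the refined bound is proved give only $(w_i(s))^{\alpha-1} \ge (w_i^t)^{\alpha-1} - \eta_t\zeta/w_i^t - \eta_t\zeta \ge \bigl(1 - 2(1-\alpha)^2\bigr)(w_i^t)^{\alpha-1}$ under the smallness hypothesis (using $w_i^t \le 1$), and the factor $1-2(1-\alpha)^2$ is nonpositive for $\alpha < 1 - 1/\sqrt{2}$, so for a substantial part of the allowed range of $\alpha$ this yields no comparability at all without a more delicate continuity argument whose bound on $c(s)$ is not itself the thing being proved --- precisely the circularity you flag. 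Moreover, even granting comparability with some constant $C>1$, your Jensen step only gives $\sum_i (w_i(s))^{2-\alpha}/w_i^t \le C\sum_i (w_i(s))^{1-\alpha} \le C m^{\alpha}$ against $\sum_i (w_i(s))^{2-\alpha} \ge m^{\alpha-1}$, hence $c_t \le C m\,\eta_t\zeta$ and a final bound $(Cm+1)\eta_t\zeta$, which is weaker than the stated $(m+1)\eta_t\zeta$ (the paper's later constants, e.g.\ $2(m+1)\le 3m$ and the $19$ in Lemma~\ref{lem:phi_dec}, consume that constant, though only up to a rescaling of the potential). The paper avoids any path or control of $\vect{w}^{t+1}$: it shows (Claim~\ref{claim:renormalize_technical}) that $x \mapsto \bigl(x^{\alpha-1} - \kappa/x + c\bigr)^{1/(\alpha-1)}$ is concave wherever $(1-\alpha)^2 x^{\alpha} \ge \kappa$, applies Jensen at the points $w_1^t,\dots,w_m^t$ (whose mean is $1/m$), and deduces $c \le m\kappa$ directly from the normalization $\sum_i (g_i + c)^{1/(\alpha-1)} = 1$. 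To complete your route you would need either to execute the bootstrap with constants that survive, or to replace the dynamic argument by this static concavity-plus-Jensen step.
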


Intuitively, this result states that when the gradient update is small, $w_i^{t + 1}$ is not too different from $w_i^t$. Note that the lower bound $-\frac{\zeta}{w_i^t}$ that we place on the gradient is not a simple Lipschitz bound but instead depends on $w_i^t$; this makes the bounds in Lemma~\ref{lem:omd_wi_bound} less straightforward to prove. In particular, we bound each component $w_i^t$ individually, using bounds on the gradient of the loss for all other components and convexity arguments.

Lemma~\ref{lem:omd_wi_bound} can be translated into bounds on each $w_i^t$ and on the change between $w_i^t$ and $w_i^{t + 1}$:

\begin{corollary} \label{cor:bounds}
	Under the small gradient assumption, for sufficiently large $T$ we have for all $i \in [m], t \in [T]$ that:
	\begin{enumerate}[label=(\#\arabic*)]
		\item \label{item:eta_wit} $(w_i^t)^\alpha \ge 4\eta \gamma$ and $w_i^t \ge \frac{1}{10\sqrt{m}} T^{1/(2(\alpha - 1))}$.
		\item \label{item:wit_wit1} $-32(w_i^t)^{1 - \alpha} \eta \gamma \le w_i^t - w_i^{t + 1} \le 2(w_i^t)^{2 - \alpha}(m + 1) \eta \gamma$.
	\end{enumerate}
\end{corollary}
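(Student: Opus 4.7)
The plan is to prove items \ref{item:eta_wit} and \ref{item:wit_wit1} jointly by induction on $t$, using Lemma~\ref{lem:omd_wi_bound} with $\zeta = \gamma$ (valid under the small gradient assumption) as the main tool. Note that $\eta\gamma = 1/(\sqrt{T}\, m^{(1+\alpha)/2})$. The base case $t = 1$ is immediate: $w_i^1 = 1/m$ gives $(w_i^1)^\alpha = m^{-\alpha}$, so $(w_i^1)^\alpha \ge 4\eta\gamma$ holds whenever $\sqrt{T} \ge 4m^{(\alpha-1)/2}$, and the lower bound on $w_i^1$ in item \ref{item:eta_wit} is trivial since $T^{1/(2(\alpha-1))}$ vanishes as $T \to \infty$ (recall $\alpha - 1 < 0$).

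For the inductive step, suppose item \ref{item:eta_wit} holds at all times $1, \dots, t$. Since $\alpha < 1/2$, we have $(1-\alpha)^2 > 1/4$, so $(w_i^s)^\alpha \ge 4\eta\gamma$ implies the precondition $\eta\gamma \le (1-\alpha)^2 (w_i^s)^\alpha$ required for the strengthened conclusion of Lemma~\ref{lem:omd_wi_bound}. The same bound also ensures $\eta \le \min_i (w_i^s)^\alpha$, so the ``if'' branch of Algorithm~\ref{alg:omd} always fires and $\eta_s = \eta$. Iterating the strong upper bound $(w_i^{s+1})^{\alpha-1} \le (w_i^s)^{\alpha-1} + (m+1)\eta\gamma$ across $s = 1, \dots, t$ yields
\[(w_i^{t+1})^{\alpha-1} \le m^{1-\alpha} + T(m+1)\eta\gamma = m^{1-\alpha} + \frac{(m+1)\sqrt{T}}{m^{(1+\alpha)/2}} = O\bigl(m^{(1-\alpha)/2}\sqrt{T}\bigr).\]
Inverting (using $\alpha - 1 < 0$) gives $w_i^{t+1} \ge \frac{1}{10\sqrt{m}} T^{1/(2(\alpha-1))}$ for $T$ sufficiently large, and raising to the $\alpha$-th power and comparing with $4\eta\gamma$ confirms $(w_i^{t+1})^\alpha \ge 4\eta\gamma$ (the ratio is $T$ raised to the positive exponent $(2\alpha - 1)/(2(\alpha-1))$), closing the induction for item \ref{item:eta_wit}.

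For item \ref{item:wit_wit1}, I translate the two-sided bound on $(w_i^{t+1})^{\alpha-1}$ from Lemma~\ref{lem:omd_wi_bound} into a one-step bound on $w_i^{t+1}$ via the mean value theorem applied to $z \mapsto z^{\alpha-1}$. Writing $y = w_i^t$ and $x = w_i^{t+1}$, we have $y^{\alpha-1} - x^{\alpha-1} = (1-\alpha)\xi^{\alpha-2}(x - y)$ for some $\xi$ between $x$ and $y$, and since $\alpha - 2 < 0$, the factor $\xi^{\alpha-2}$ is at least $\max(x, y)^{\alpha - 2}$. The lemma's upper bound $x^{\alpha-1} \le y^{\alpha-1} + (m+1)\eta\gamma$ then gives, in the nontrivial case $x < y$,
\[y - x \le \frac{(m+1)\eta\gamma}{(1-\alpha)\, y^{\alpha-2}} = \frac{(m+1)\eta\gamma \cdot y^{2-\alpha}}{1-\alpha} \le 2(m+1)\eta\gamma \cdot y^{2-\alpha},\]
using $1/(1-\alpha) \le 2$ when $\alpha < 1/2$, which is the upper bound in item \ref{item:wit_wit1}. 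Symmetrically, the lemma's lower bound together with $1/w_i^t + 1 \le 2/w_i^t$ produces $x - y \le 4\eta\gamma \cdot y^{1-\alpha}$ in the case $x > y$, so $y - x \ge -4\eta\gamma \cdot y^{1-\alpha} \ge -32\eta\gamma \cdot y^{1-\alpha}$ with room to spare.

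The principal technical hurdle is the bootstrapping character of the induction: the tighter form of Lemma~\ref{lem:omd_wi_bound} that drives the step requires the lower bound $(w_i^t)^\alpha \ge 4\eta\gamma$ at time $t$, yet this bound is exactly what I need to re-establish at time $t + 1$. This is why both halves of item \ref{item:eta_wit} must be carried as an inductive invariant rather than deduced separately, and it is also why Algorithm~\ref{alg:omd}'s edge-case branch is never triggered under the small gradient assumption.
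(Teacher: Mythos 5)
Your handling of item \ref{item:eta_wit} is correct and essentially the paper's own argument: induct on $t$, apply the strengthened conclusion of Lemma~\ref{lem:omd_wi_bound} with $\zeta=\gamma$ (the precondition following from $(w_i^s)^\alpha\ge 4\eta\gamma$ and $(1-\alpha)^2>1/4$), telescope to $(w_i^{t+1})^{\alpha-1}=O(m^{(1-\alpha)/2}\sqrt{T})$, and invert to recover both halves of the invariant. The first half of item \ref{item:wit_wit1} (the case $w_i^{t+1}<w_i^t$) is also fine; your mean-value-theorem estimate there plays the same role as Equation~\ref{eq:fy_1} of the paper's Claim~\ref{claim:fy}.

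The other direction of \ref{item:wit_wit1} has a genuine gap. Write $y=w_i^t$, $x=w_i^{t+1}$ and suppose $x>y$. The mean value point $\xi$ lies in $(y,x)$, and since $z\mapsto z^{\alpha-2}$ is decreasing, the valid bound is $\xi^{\alpha-2}\ge x^{\alpha-2}=\max(x,y)^{\alpha-2}$ --- exactly the general principle you state --- which yields
\[x-y \;\le\; \frac{\left(\tfrac{1}{y}+1\right)\eta\gamma}{(1-\alpha)\,\xi^{\alpha-2}} \;\le\; \frac{2\eta\gamma\, x^{2-\alpha}}{(1-\alpha)\,y},\]
a bound in terms of the unknown $w_i^{t+1}$, not $w_i^t$. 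Your claimed conclusion $x-y\le 4\eta\gamma\,y^{1-\alpha}$ is what one gets by instead substituting $\xi^{\alpha-2}\ge y^{\alpha-2}$, i.e.\ by using the wrong endpoint; that inequality points the wrong way when the weight increases. This is not cosmetic: the increasing case is exactly the delicate, asymmetric part of the corollary (the gradient lower bound is $-\gamma/w_i^t$), and a priori the lemma's lower bound $y^{\alpha-1}-(\tfrac1y+1)\eta\gamma$ could sit far below $y^{\alpha-1}$, so $x$ cannot be replaced by $y$ for free. The repair requires item \ref{item:eta_wit}, which your argument for this direction never invokes: since $(w_i^t)^\alpha\ge 4\eta\gamma$, one has $(\tfrac1y+1)\eta\gamma\le\tfrac12 y^{\alpha-1}$, hence $x^{\alpha-1}\ge\tfrac12 y^{\alpha-1}$, so $x\le 2^{1/(1-\alpha)}y$ and $x^{2-\alpha}\le 2^{(2-\alpha)/(1-\alpha)}y^{2-\alpha}\le 8y^{2-\alpha}$; plugging this into the display gives $x-y\le 32\,\eta\gamma\,y^{1-\alpha}$. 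This is precisely the role of the condition $y\ge c\,x^a$ with $c=-\tfrac12$ and the factor $(1+c)^{1/a-1}\le 8$ in the paper's Claim~\ref{claim:fy}, and it explains why the stated constant is $32$ rather than the $4$ your shortcut produces.
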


We defer the proof of Corollary~\ref{cor:bounds} to Appendix~\ref{appx:omitted_proofs}. The key idea for \ref{item:eta_wit} is to proceed by induction on $t$ on the two sub-statements in parallel: so long as $(w_i^t)^\alpha \ge 4\eta \gamma$, we may use the second part of Lemma~\ref{lem:omd_wi_bound} with $\zeta = \gamma$ to bound $(w_i^{t + 1})^{\alpha - 1}$ in terms of $(w_i^t)^{\alpha - 1}$, which we can leverage to prove both sub-statements for $t + 1$. \ref{item:wit_wit1} then follows from \ref{item:eta_wit} by routine (though nontrivial) algebra.

Armed with the bounds of Corollary~\ref{cor:bounds}, we are now able to show that under the small gradient assumption, Algorithm~\ref{alg:omd} attains vanishing regret.

\subsection{Bounding regret under the small gradient assumption} \label{sec:regret_sga}
Assume the small gradient assumption. Note that since $4\gamma \ge 1$, by Corollary~\ref{cor:bounds}~\ref{item:eta_wit} we have that $\eta_t = \eta$ for all $t$. This means that we may apply the bound in Lemma~\ref{lem:hazan_omd_bound}, and in particular we have
\[\frac{1}{\eta}\parens{\max_{\vect{w} \in \Delta^m} R(\vect{w}) - \min_{\vect{w} \in \Delta^m} R(\vect{w})} = \frac{1}{\eta} \cdot \frac{m}{\alpha} \parens{\frac{1}{m}}^{\alpha} = \frac{m^{1 - \alpha}}{\alpha \eta} = \frac{12}{\alpha} m^{(3 - \alpha)/2}n\sqrt{T} \ln T.\]
It remains to bound the summation component of the regret bound in Lemma~\ref{lem:hazan_omd_bound}. To do so, we prove the following lemma, which we alluded to in Section~\ref{sec:outline} as the heart of the proof of Theorem~\ref{thm:no_regret}.

\begin{lemma} \label{lem:phi_dec}
	For $t \in \{0, 1, \dots, T\}$, let
	\[\varphi(t) := \sum_{s = 1}^t \nabla L^s(\vect{w}^s) \cdot (\vect{w}^s - \vect{w}^{s + 1}) + 19m^2 \gamma^2 \eta(T - t) - 4 \gamma \sum_{i = 1}^m \ln w_i^{t + 1}.\]
	Under the small gradient assumption, for sufficiently large $T$, $\varphi(t)$ is a decreasing function of $t$.
\end{lemma}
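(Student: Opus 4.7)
\textbf{Proof plan for Lemma~\ref{lem:phi_dec}.} Expanding the definition, the increment
\[
\varphi(t) - \varphi(t-1) = \nabla L^t(\vect{w}^t) \cdot (\vect{w}^t - \vect{w}^{t+1}) - 19 m^2 \gamma^2 \eta - 4\gamma \sum_{i=1}^m \ln\frac{w_i^{t+1}}{w_i^t},
\]
so it suffices to establish the per-step bound
\[
\sum_{i=1}^m \Bigl[\partial_i L^t(\vect{w}^t)(w_i^t - w_i^{t+1}) - 4\gamma \ln\frac{w_i^{t+1}}{w_i^t}\Bigr] \le 19 m^2 \gamma^2 \eta.
\]
Since $(w_i^t)^\alpha \ge 4\eta\gamma \ge \eta$ under the small gradient assumption (by Corollary~\ref{cor:bounds}~\ref{item:eta_wit}), the step sizes satisfy $\eta_t = \eta$ throughout, so the OMD update reduces to $(w_i^{t+1})^{\alpha-1} = (w_i^t)^{\alpha-1} + \eta\,\partial_i L^t(\vect{w}^t)$. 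I would analyze each coordinate separately by introducing the dimensionless quantity $u_i := \eta\,|\partial_i L^t(\vect{w}^t)| \cdot (w_i^t)^{1-\alpha}$, which, writing $\beta := 1/(1-\alpha) \in (1,2)$, gives the clean expression $w_i^{t+1}/w_i^t = (1 \pm u_i)^{\mp \beta}$ (with the sign depending on the sign of $\partial_i L^t$). Split by sign of $\partial_i L^t(\vect{w}^t)$.

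\textbf{Case 1 (shrinking coordinates, $\partial_i L^t \ge 0$).} Here $u_i \le \eta\gamma$ is already tiny, and $w_i^{t+1}/w_i^t = (1+u_i)^{-\beta}$, so both terms are non-negative. Using the concavity bound $1 - (1+u)^{-\beta} \le \beta u$ (and $\ln(1+u) \le u$), one obtains a per-coordinate contribution of at most $5\beta \eta\gamma^2 \le 10\eta\gamma^2$. Summing over at most $m$ coordinates gives a total contribution of at most $10 m \eta\gamma^2$, which is comfortably below the target $19 m^2 \eta\gamma^2$.

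\textbf{Case 2 (growing coordinates, $\partial_i L^t < 0$).} Writing $a = |\partial_i L^t|$ and $r = w_i^t$, we have $w_i^{t+1}/w_i^t = (1-u_i)^{-\beta}$. The SGA bound $a \le \gamma/r$ combined with $(w_i^t)^\alpha \ge 4\eta\gamma$ yields $u_i \le 1/4$, and rearranging the definitions gives $a r = u_i (w_i^t)^\alpha/\eta$ and $u_i (w_i^t)^\alpha/\eta \le \gamma$. Using $-\ln(1-u) \ge u$ and the Taylor estimate $(1-u)^{-\beta} - 1 \le \beta u + 10 u^2$ (valid for $u \in [0,1/4]$, $\beta \le 2$, by bounding $f''$), the per-coordinate contribution becomes at most
\[
\gamma u_i (\beta + 10 u_i) - 4\gamma\beta u_i = \gamma u_i (10 u_i - 3\beta),
\]
which is non-positive since $10 u_i \le 5/2 < 3 \le 3\beta$.

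Combining the two cases gives the desired bound $10 m \eta\gamma^2 \le 19 m^2 \eta\gamma^2$ (using $m \ge 1$), proving $\varphi(t) \le \varphi(t-1)$. \textbf{The main obstacle} is Case 2: naively, $|\partial_i L^t|$ may blow up as $1/w_i^t$ and the weight $w_i^t$ may be as small as $\Theta(T^{1/(2(\alpha-1))})$, so the inner-product term $\partial_i L^t(\vect{w}^t)(w_i^t - w_i^{t+1})$ is not bounded by anything close to $O(\eta\gamma^2)$ per coordinate. The key is that when this term is large, the weight ratio $w_i^{t+1}/w_i^t$ is also large, and the $-4\gamma \ln(w_i^{t+1}/w_i^t)$ term from the potential exactly compensates because the prefactor $4$ strictly exceeds $\beta < 2$ (giving the crucial slack $-3\gamma\beta u_i$). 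This is the precise point at which the choice of $4\gamma$ in the potential, the range $\alpha \in (0,1/2)$, and the bound $(w_i^t)^\alpha \ge 4\eta\gamma$ all conspire to make the argument go through.
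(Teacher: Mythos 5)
Your strategy is the same as the paper's: the same telescoping decomposition of $\varphi(t)-\varphi(t-1)$ into per-coordinate terms, a case split between growing and shrinking coordinates, and the same key mechanism --- the coefficient $4\gamma$ on the logarithmic term strictly dominating the exponent $\beta=1/(1-\alpha)<2$, so that when the inner-product term is large (negative gradient, growing weight) the log term compensates. However, there is a genuine gap in the execution: you assert that under the small gradient assumption the update is exactly $(w_i^{t+1})^{\alpha-1}=(w_i^t)^{\alpha-1}+\eta\,\partial_i L^t(\vect{w}^t)$, and you build the whole per-coordinate computation on the resulting identity $w_i^{t+1}/w_i^t=(1\pm u_i)^{\mp\beta}$. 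This is not the actual OMD step: $\nabla R$ is only defined modulo translation by the all-ones vector (the iterate must remain on $\Delta^m$), so the true update is $(w_i^{t+1})^{\alpha-1}=(w_i^t)^{\alpha-1}+\eta\,\partial_i L^t(\vect{w}^t)+c$ for a renormalization constant $c$ common to all coordinates, and under the small gradient assumption the paper only establishes $-\eta\gamma\le c\le m\eta\gamma$ (Lemma~\ref{lem:renormalize_bound}, used inside Lemma~\ref{lem:omd_wi_bound}). Without $c$ your updated weights would not generally sum to one, and the paper's convention for $\partial_i L$ (Equation~\ref{eq:partl_l_expr}) does not make $c$ vanish.

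This omission matters quantitatively, not just cosmetically. First, the sign of the weight change is no longer determined by the sign of $\partial_i L^t$ (a coordinate with nonnegative gradient can grow when $c<0$), so the case split must be on the sign of $w_i^t-w_i^{t+1}$, as in the paper. Second, in your Case 2 the effective exponent decrement is $\eta|\partial_i L^t|+|c|$ rather than $\eta|\partial_i L^t|$, so the bound $u_i\le 1/4$ degrades (to roughly $1/2$), the Taylor constant $10$ grows accordingly, and the final check ``$10u_i<3\beta$'' no longer goes through as stated; in Case 1 the shrinkage can be driven by $c\le m\eta\gamma$ rather than $\eta\gamma$, so the per-coordinate contribution picks up a factor of order $m$ and the total is of order $m^2\eta\gamma^2$ --- which is precisely why the paper budgets $19m^2\gamma^2\eta$ per step, whereas your claimed $10m\eta\gamma^2$ is an artifact of dropping $c$. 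The repair is exactly what the paper does: bound $|w_i^t-w_i^{t+1}|$ via Corollary~\ref{cor:bounds}~\ref{item:wit_wit1} (whose proof accounts for the renormalization), in particular $w_i^{t+1}-w_i^t\le 32(w_i^t)^{1-\alpha}\eta\gamma\le 8w_i^t$, and then use $\ln(1+x)\ge x/4$ for $x\le 8$ in place of your exact-update Taylor expansion. With that substitution your argument coincides with the paper's proof.
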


To prove this claim, consider a particular $t \in [T]$. We may write
\begin{equation} \label{eq:diff}
	\varphi(t) - \varphi(t - 1) = \sum_{i = 1}^m \parens{(w_i^t - w_i^{t + 1}) \partial_i L^t(\vect{w}^t) - 19m \gamma^2 \eta + 4 \gamma (\ln w_i^t - \ln w_i^{t + 1})}
\end{equation}

\noindent and we wish to show that this quantity is negative. In fact, we show that the contribution from every $i \in [m]$ is negative. The key idea is to consider two cases: $w_i^{t + 1} \le w_i^t$ and $w_i^{t + 1} \ge w_i^t$. In each case, Corollary~\ref{cor:bounds} provides an upper bound on the magnitude of the difference between $w_i^t$ and $w_i^{t + 1}$. If $w_i^{t + 1} \le w_i^t$ then the first and third terms in the summation are positive but small, and are dominated by the middle term. If $w_i^{t + 1} \ge w_i^t$ then the first term may be quite large, because of the asymmetric bound in the small gradient assumption (and the consequently asymmetric bound in Corollary~\ref{cor:bounds}). However, in this case the contribution of the third term is very negative, enough to make the overall expression negative. In this sense, the third term keeps track of unspent potential for future regret, which gets ``spent down" whenever a large amount of regret is realized (as measured by the first term).

We now prove formally that each term of the summation in Equation~\ref{eq:diff} is negative.

\begin{proof}
	First assume that $w_i^t - w_i^{t + 1} \le 0$. Note that by combining \ref{item:eta_wit} and \ref{item:wit_wit1} of Corollary~\ref{cor:bounds}, we have
	\[w_i^{t + 1} - w_i^t \le 32(w_i^t)^{1 - \alpha} \eta \gamma \le 8 w_i^t.\]
	By the small gradient assumption we have that
	\[(w_i^t - w_i^{t + 1}) \partial_i L^t(\vect{w}^t) \le \frac{\gamma(w_i^{t + 1} - w_i^t)}{w_i^t}.\]
	On the other hand, we have
	\[4 \gamma(\ln w_i^t - \ln w_i^{t + 1}) = - 4 \gamma \ln \parens{1 + \frac{w_i^{t + 1} - w_i^t}{w_i^t}} \le \frac{-\gamma(w_i^{t + 1} - w_i^t)}{w_i^t}\]
	for $T$ large enough. (Here we use that $w_i^{t + 1} - w_i^t \le 8w_i^t$ and that $\ln(1 + x) \ge \frac{x}{4}$ for $x \le 8$.) Thus, the first and third terms in Equation~\ref{eq:diff} are net negative; meanwhile, the second term is also negative, so the expression is negative.\\
	
	Now assume that $w_i^t - w_i^{t + 1} \ge 0$. Again by the small gradient assumption, we have that
	\[(w_i^t - w_i^{t + 1}) \partial_i L^t(\vect{w}^t) \le \gamma(w_i^t - w_i^{t + 1}) \le 2(m + 1) \eta \gamma^2 (w_i^t)^{2 - \alpha} \le 3m \eta \gamma^2\]
	and
	\begin{align*}
		4 \gamma(\ln w_i^t - \ln w_i^{t + 1}) &= -4 \gamma \ln \parens{1 - \frac{w_i^t - w_i^{t + 1}}{w_i^t}} \le -4 \gamma \ln(1 - 2(m + 1)\eta \gamma (w_i^t)^{1 - \alpha})\\
		&\le -4\gamma \ln(1 - 2(m + 1)\eta \gamma) \le -4\gamma \ln(1 - 3m \eta \gamma) \le 16 m \eta \gamma^2
	\end{align*}
	for $T$ sufficiently large, where in the last step we use that $\ln(1 - x) \ge -\frac{4}{3}x$ for $x > 0$ sufficiently small (and we have $\lim_{T \to \infty} 3m \eta \gamma = 0$).
	Since $16 + 3 \le 19$, the right-hand side of Equation~\ref{eq:diff} is negative. This concludes the proof.
\end{proof}

\begin{corollary} \label{cor:main_case}
	For sufficiently large $T$, under the small gradient assumption, the regret of Algorithm~\ref{alg:omd} is at most $\parens{240 + \frac{12}{\alpha}} m^{(3 - \alpha)/2} n \sqrt{T} \ln T$.
\end{corollary}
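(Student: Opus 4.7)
The plan is to combine Lemma~\ref{lem:hazan_omd_bound} with the potential argument of Lemma~\ref{lem:phi_dec}. First, I would invoke Corollary~\ref{cor:bounds}~\ref{item:eta_wit}: under the small gradient assumption we have $(w_i^t)^\alpha \ge 4\eta\gamma \ge \eta$ for every $i,t$ (since $4\gamma \ge 1$), so the ``if'' branch of Algorithm~\ref{alg:omd} is always taken and $\eta_t = \eta$ throughout. This puts us in the constant step size regime required by Lemma~\ref{lem:hazan_omd_bound}.

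Next I would handle the two summands of the Lemma~\ref{lem:hazan_omd_bound} bound separately. The regularizer term is essentially computed in the excerpt: $\frac{1}{\eta}(\max R - \min R) = \frac{m^{1-\alpha}}{\alpha \eta} = \frac{12}{\alpha} m^{(3-\alpha)/2} n \sqrt{T} \ln T$. For the summation term, I would observe that
\[\sum_{s=1}^T \nabla L^s(\vect{w}^s) \cdot (\vect{w}^s - \vect{w}^{s+1}) = \varphi(T) + 4\gamma \sum_{i=1}^m \ln w_i^{T+1} \le \varphi(T),\]
since $w_i^{T+1} \le 1$ makes the log term nonpositive. By Lemma~\ref{lem:phi_dec}, $\varphi(T) \le \varphi(0)$, and
\[\varphi(0) = 19 m^2 \gamma^2 \eta T - 4 \gamma \sum_{i=1}^m \ln w_i^1 = 19 m^2 \gamma^2 \eta T + 4 \gamma m \ln m.\]

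Plugging in $\gamma = 12 n \ln T$ and $\eta = \frac{1}{12 m^{(1+\alpha)/2} n \sqrt{T} \ln T}$ gives $19 m^2 \gamma^2 \eta T = 228 \, m^{(3-\alpha)/2} n \sqrt{T} \ln T$, while $4\gamma m \ln m = 48 m n \ln T \ln m$ is lower order in $T$ and can be absorbed into the constant for $T$ sufficiently large (using any slack strictly greater than $228$, e.g.\ $240 - 228 = 12$). Adding the two contributions yields the claimed bound $(240 + 12/\alpha) m^{(3-\alpha)/2} n \sqrt{T} \ln T$.

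There is no real obstacle here: the two heavy lifts (controlling $\eta_t$ and the monotonicity of $\varphi$) have been done in Corollary~\ref{cor:bounds} and Lemma~\ref{lem:phi_dec} respectively, and what remains is bookkeeping. The only place requiring slight care is verifying that the $4 \gamma m \ln m$ summand (and any other $o(\sqrt{T} \log T)$ contribution) really does fit inside the extra constant $12$ for all $T$ large enough, which amounts to checking that $\ln m / \sqrt{T} \to 0$, trivially true under the standing assumption $T \gg m, n$.
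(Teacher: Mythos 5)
Your proposal is correct and follows essentially the same route as the paper: use Corollary~\ref{cor:bounds} to fix $\eta_t = \eta$, apply Lemma~\ref{lem:hazan_omd_bound}, bound the regularizer term by $\frac{12}{\alpha} m^{(3-\alpha)/2} n \sqrt{T}\ln T$, and bound the summation term by $\varphi(T) \le \varphi(0)$ via Lemma~\ref{lem:phi_dec}. The only cosmetic difference is bookkeeping of the slack: the paper absorbs $4m\gamma\ln m$ by bounding $\varphi(0) \le 20 m^2\gamma^2\eta T$, whereas you keep $228$ explicit and absorb the lower-order term into the remaining $12$; both yield the same constant $240 + 12/\alpha$.
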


\begin{proof}
	We have already bounded the first term in the regret bound in Lemma~\ref{lem:hazan_omd_bound}. It remains only to bound the second term. This term is exactly equal to $\varphi(T) + 4\gamma \sum_{i = 1}^m \ln w_i^{T + 1} \le \varphi(T)$, and $\varphi(T) \le \varphi(0)$, by Lemma~\ref{lem:phi_dec}. We have
	\[\varphi(0) = 19m^2 \gamma^2 \eta T + 4m \gamma \ln m \le 20m^2 \gamma^2 \eta T\]
	for sufficiently large $T$. Plugging in $\gamma = 12n \ln T$ and $\eta = \frac{1}{\sqrt{T} \ln T} \cdot \frac{1}{12m^{(1 + \alpha)/2} n}$ concludes the proof.
\end{proof}

\subsection{The case where the small gradient assumption fails} \label{sec:finishing_up}
It remains to consider the case in which the small gradient assumption does not hold. This part of the proof consists primarily of technical lemmas, which we defer to Appendix~\ref{appx:omitted_proofs}. The key lemma is a bound on the probability that the small gradient assumption fails by a given margin:

\begin{lemma} \label{lem:l_not_small}
	For any weight vector $\vect{w}$, $i \in [m]$, and $\zeta \ge 0$, we have that
	\begin{equation} \label{eq:partl_l_ub}
		\pr{\partial_i L(\vect{w}) \ge \zeta} \le n e^{-\zeta}
	\end{equation}
	and
	\begin{equation} \label{eq:partl_l_lb}
		\pr{\partial_i L(\vect{w}) \le -\frac{\zeta}{w_i}} \le mn^2 e^{-\zeta/n}.
	\end{equation}
\end{lemma}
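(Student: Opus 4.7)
My plan is to handle the two bounds separately, using the closed form
\[\partial_i L(\vect{w}) = \sum_{k=1}^n p^*_k \ln p^i_k - \ln p^i_j,\]
obtained by differentiating $L(\vect{w}) = -\sum_{i'} w_{i'} \ln p^{i'}_j + \ln \sum_k \prod_{i'} (p^{i'}_k)^{w_{i'}}$ (with $\vect{p}^* = \vect{p}^*(\vect{w})$ and $j$ the realized outcome). For \eqref{eq:partl_l_ub}: since $\ln p^i_k \le 0$, the first sum is nonpositive, so $\partial_i L(\vect{w}) \le -\ln p^i_j$. Thus $\partial_i L(\vect{w}) \ge \zeta$ deterministically implies $p^i_j \le e^{-\zeta}$; calibration of expert $i$ (condition on $\vect{p}^i$ and sum $p^i_k$ over outcomes $k$ with $p^i_k \le e^{-\zeta}$) then gives $\pr{p^i_J \le e^{-\zeta}} \le n e^{-\zeta}$.

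For \eqref{eq:partl_l_lb}, the key step is the deterministic implication
\[\partial_i L(\vect{w}) \le -\frac{\zeta}{w_i} \implies \min_{i' \in [m]} p^{i'}_j \le n e^{-\zeta};\]
calibration applied to each expert $i'$, together with a union bound over the $m$ experts, then gives $\pr{\partial_i L(\vect{w}) \le -\zeta/w_i} \le m \cdot n \cdot n e^{-\zeta} = mn^2 e^{-\zeta} \le mn^2 e^{-\zeta/n}$. To prove the implication I would multiply the event inequality $\ln p^i_j - \sum_k p^*_k \ln p^i_k \ge \zeta/w_i$ by $w_i$ and substitute the identity
\[w_i \ln p^i_k = \ln p^*_k - \ln c - \sum_{i' \ne i} w_{i'} \ln p^{i'}_k\]
(which follows from $\ln p^*_k = \ln c + \sum_{i'} w_{i'} \ln p^{i'}_k$, where $c$ is the pool's normalizing constant). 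After the $\ln c$ terms cancel, the inequality becomes
\[\ln p^*_j - \sum_{i' \ne i} w_{i'} \ln p^{i'}_j - \sum_k p^*_k \ln p^*_k + \sum_{i' \ne i} w_{i'} \sum_k p^*_k \ln p^{i'}_k \ge \zeta.\]
Bounding $\ln p^*_j \le 0$, $-\sum_k p^*_k \ln p^*_k \le \ln n$, and $\sum_k p^*_k \ln p^{i'}_k \le 0$ yields $\sum_{i' \ne i} w_{i'} \ln(1/p^{i'}_j) \ge \zeta - \ln n$. Since these weights are nonnegative and sum to $1 - w_i \le 1$, some $i^* \ne i$ must satisfy $\ln(1/p^{i^*}_j) \ge \zeta - \ln n$, i.e.\ $p^{i^*}_j \le n e^{-\zeta}$.

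The main obstacle I expect is the algebraic step in the lower-tail case: after substituting the log-pool identity and canceling $\ln c$, one must verify that each of the three crude bounds is applied in the direction that yields the clean inequality $\sum_{i' \ne i} w_{i'} \ln(1/p^{i'}_j) \ge \zeta - \ln n$. The other ingredients---the upper-tail's direct $-\ln p^i_j$ bound, the max-exceeds-weighted-average step, and the calibration-plus-union-bound argument---are routine.
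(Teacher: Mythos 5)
Your proof is correct. The upper-tail bound \eqref{eq:partl_l_ub} is identical to the paper's argument ($\partial_i L(\vect{w}) \le -\ln p^i_J$ plus calibration of expert $i$). For the lower-tail bound \eqref{eq:partl_l_lb}, however, you take a genuinely different route. The paper first proves Lemma~\ref{lem:mnq} (the probability that \emph{every} outcome has \emph{some} expert assigning it probability at most $q$ is at most $mnq$), uses it in Corollary~\ref{cor:mnq} to control the pool's normalizing denominator, sets $q = e^{-\zeta/n}$, and finishes with a union bound over outcomes together with a monotonicity analysis of $x \mapsto \frac{x^{w_i}}{q}\ln x$; splitting $\zeta$ across the $n$ outcomes is what produces the $e^{-\zeta/n}$ in the exponent. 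You instead multiply the event inequality by $w_i$, substitute the log-pool identity so the normalizer cancels, and bound the resulting expression by $\sum_{i' \ne i} w_{i'}\ln(1/p^{i'}_j) + \ln n$ (the $\ln n$ coming from the entropy of $\vect{p}^*$), so the event deterministically forces some \emph{other} expert to assign probability at most $n e^{-\zeta}$ to the realized outcome; calibration plus a union bound over experts and outcomes then gives $mn^2 e^{-\zeta}$, which is in fact \emph{stronger} than the stated $mn^2 e^{-\zeta/n}$ (you pay only an additive $\ln n$ rather than a multiplicative $n$ in the exponent), and it bypasses Lemma~\ref{lem:mnq} and Corollary~\ref{cor:mnq} entirely. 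The only loose ends are trivial: when $\zeta \le \ln n$ (or $m = 1$) your averaging step degenerates, but then $mn^2 e^{-\zeta/n} \ge 1$ and the bound is vacuous anyway; and, as in the paper's proof, one implicitly assumes $w_i > 0$. Your approach is arguably cleaner for this lemma, though it would not supply the standalone Corollary~\ref{cor:mnq} should it be needed elsewhere.
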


Note that plugging in $\zeta = \gamma$ yields a bound of $mT(ne^{-\gamma} + mn^2 e^{-\gamma/n})$ on the probability that the small gradient assumption fails to hold. (Since $\gamma = 12n \ln T$, this quantity is on the order of $T^{-11}$.)

The proof of Lemma~\ref{lem:l_not_small} is the only part of the proof of Theorem~\ref{thm:no_regret} that uses the calibration property. While we defer the full proof to Appendix~\ref{appx:omitted_proofs}, we highlight how the calibration property is used to prove Equation~\ref{eq:partl_l_ub}. In brief, it is straightforward to show that $\partial_i L(\vect{w}) \le -\ln p_j^i$, where  $J$ is the random variable corresponding to the realized outcome.\footnote{Writing out the expression for $L(\vect{w})$ and differentiating leaves us with $-\ln p_j^i$ plus a negative term -- see Equation~\ref{eq:partl_l_expr}.} Therefore, we have
	\begin{align*}
	\pr{\partial_i L(\vect{w}) \ge \zeta} &\le \pr{-\ln p_J^i \ge \zeta} = \pr{p_J^i \le e^{-\zeta}} = \sum_{j = 1}^n \pr{J = j \enskip \& \enskip p_j^i \le e^{-\zeta}}\\
	&= \sum_{j = 1}^n \pr{p_j^i \le e^{-\zeta}} \pr{J = j \mid p_j^i \le e^{-\zeta}} \le \sum_{j = 1}^n \pr{J = j \mid p_j^i \le e^{-\zeta}} \le ne^{-\zeta},
	\end{align*}
where the last step follows by the calibration property, thus proving Equation~\ref{eq:partl_l_ub}.

Combining Lemma~\ref{lem:l_not_small} with an analysis of our algorithm using the standard regret bound for online mirror descent \citep[Theorem 6.8]{ora21} gives us the following result as a corollary.

\begin{corollary} \label{cor:general_case}
	The expected total regret of our algorithm conditional on the small gradient assumption \emph{not} holding, times the probability of this event, is at most $\tilde{O}(T^{(5 - \alpha)/(1 - \alpha) - 10})$.
\end{corollary}

It follows that the contribution to expected regret from the case that the small gradient assumption does not hold is $\tilde{O}(T^{-1})$, which is negligible. Together with Corollary~\ref{cor:main_case} (which bounds regret under the small gradient assumption), this proves Theorem~\ref{thm:no_regret}. As a matter of fact, Theorem~\ref{thm:no_regret} holds even if experts are only \emph{approximately} calibrated. As with other details of this section, we refer the reader to Appendix~\ref{appx:omitted_proofs}.

\section{Conclusion} \label{sec:conclusion}
In this work we have considered the problem of learning optimal weights for the logarithmic pooling of expert forecasts. It quickly became apparent that under the usual fully adversarial setup, attaining vanishing regret is impossible (Example~\ref{example:bad_regret}). We chose to relax the environment by imposing the constraint on the adversary that experts must be calibrated. Put otherwise, the adversary is allowed to choose a joint probability distribution over the experts' reports and the outcome however it wants to, so long as the experts' reports are calibrated, after which the realized reports and outcome are selected at random from this distribution. To our knowledge, this setting is a novel contribution to the literature on prediction with expert advice. The setting may be of independent interest: we have demonstrated that no-regret bounds are possible in this setting when they are otherwise impossible, and it seems plausible that even in settings where no-regret bounds are attainable in a fully adversarial setting, the calibration property allows for stronger results.

Another important direction for future work is learning weights for other pooling methods. In particular, because of the close connection between a proper loss function and its associated quasi-arithmetic pool, it is natural to ask for which proper loss functions it is possible to achieve vanishing regret when learning weights for quasi-arithmetic pooling with respect to the loss function. \cite[\S5]{nr23} showed that the loss of a quasi-arithmetic pool is convex in the experts' weights, and therefore the usual no-regret algorithms (e.g.\ online gradient descent) guarantee $O(\sqrt{T})$ regret --- so long as the loss function is bounded. In this work, we extended their result to the log loss (with the associated QA pooling method, i.e.\ logarithmic pooling.) Extending our techniques to other unbounded loss functions is a promising avenue for future exploration.



\begin{ack}
Funding in direct support of this work: NSF grant DGE-2036197; NSF grant CCF-2006737; and ARO grant W911NF1910294.
	
We would like to thank Christian Kroer, Rafael Frongillo, and Bo Waggoner for discussion.
\end{ack}

\bibliographystyle{ACM-Reference-Format}
\bibliography{bib}

\newpage
\appendix
\section{Omitted Proofs} \label{appx:omitted_proofs}
\paragraph{Efficiency of Algorithm 1} The only nontrivial step of the algorithm is finding the weight vector satisfying the equation on the last line of the algorithm. To do so, it is first necessary to compute the gradient of the loss. This gradient, given by Equation~\ref{eq:partl_l_expr} below, can clearly be computed in time $O(mn)$. After that, it is necessary to find the weight vector $\vect{w}^{t + 1}$ that satisfies the equation on the last line. This can be done efficiently through local search: the goal amounts to find weights $(w_1, \dots, w_m)$ such that the vector $(w_1^{\alpha - 1}, \dots, w_m^{\alpha - 1})$ is equal to a target vector (call it $\vect{v}$) plus a constant $c$ times the all-ones vector. That is, we need to simultaneously solve the equation $w_i^{\alpha - 1} = v_i + c$ for all $i$, with weights that add to $1$. (Here, the $v_i$ are knowns and the $w_i$ and $c$ are unknowns.)

We start by finding $c$, by solving the equation $\sum_i (v_i + c)^{1/(\alpha - 1)} = 1$. Such a $c$ exists because the left-hand side of this equation is continuous and monotone decreasing, going from infinity to zero as $c$ ranges from $-\min_i v_i$ to infinity. We can solve for $c$ very efficiently, e.g.\ with Newton's method. Once we know $c$, we know each $w_i$: we have $w_i = (v_i + c)^{1/(\alpha - 1)}$. Thus, Algorithm 1 takes $O(mn)$ time.

\begin{proof}[Proof of Lemma~4.4]
	Fix any $t$. Note that since the space of possible weights is $\Delta^m$, it is most natural to think of $\nabla R$ as a function from $\Delta^m$ to $\RR^m/T(\vect{1}_m)$, i.e.\ $\RR^m$ modulo translation by the all-ones vector (which is orthogonal to $\Delta^m$ in $\RR^m$). That is, $\nabla R(\vect{w}) = -((w_1)^{\alpha - 1}, \dots, (w_m)^{\alpha - 1})$, where this vector may be thought of as modulo translation by the all-ones vector. Nevertheless, we find it convenient to define $\partial_i R(\vect{w}) := -(w_i)^{\alpha - 1}$. We define $\partial_i L^t(\vect{w})$ similarly (see Section~4.4).
	
	Define $\vect{h} \in \RR^m$ to have coordinates $h_i := \partial_i R(\vect{w}^t) - \eta_t \partial_i L^t(\vect{w}^t)$. Per the update rule, we have that $h_i \equiv R(\vect{w}^{t + 1}) \mod T(\vect{1}_m)$. We have
	
	\begin{equation} \label{eq:hi_lb}
		-(w_i^t)^{\alpha - 1} - \eta_t \zeta = \partial_i R(\vect{w}^t) - \eta_t \zeta \le h_i \le \partial_i R(\vect{w}^t) + \frac{\eta_t \zeta}{w_i^t} = -(w_i^t)^{\alpha - 1} + \frac{\eta_t \zeta}{w_i^t}
	\end{equation}
	Applying the first and last claims of Lemma~\ref{lem:renormalize_bound} (below) with $a = \alpha - 1$, $\vect{v} = \vect{w}^t$, $\kappa = \eta_t \zeta$, and $\vect{g} = -\vect{h}$, we have that there exists a unique $c \in \RR$ such that
	\[\sum_{i = 1}^m (-h_i + c)^{1/(\alpha - 1)} = 1,\]
	and in fact that $-\eta_t \zeta \le c \le m \eta_t \zeta$. (Equation~\ref{eq:hi_lb} is relevant here because it is equivalent to the $v_i^a - \frac{\kappa}{w_i^t} \le g_i \le v_i^a + \kappa$ conditions in Lemma~\ref{lem:renormalize_bound}. This is also where we use that $\eta_t \zeta \le (1 - \alpha)^2 (w_i^t)^\alpha$, which is equivalent to $\kappa \le a^2 v_i^{a + 1}$.) The significance of this fact is that $(-h_i + c)^{1/(\alpha - 1)}$ is precisely $w_i^{t + 1}$, since (in $\RR^m$) we have that $\parens{\partial_i R(\vect{w}^{t + 1}), \dots, \partial_i R(\vect{w}^{t + 1})} = \vect{h} - c \cdot \vect{1}$ for \emph{some} $c$, and in particular this $c$ must be such that $\sum_i w_i^{t + 1} = 1$. In particular, this means that for all $i$, we have
	\[(w_i^{t + 1})^{\alpha - 1} = -h_i + c \le (w_i^t)^{\alpha - 1} + \eta_t \zeta + \frac{1}{\min_k w_k} \eta_t \zeta = (w_i^t)^{\alpha - 1} + \parens{\frac{1}{\min_k w_k} + 1} \eta_t\zeta.\]
	Here, the inequality comes from the left inequality of Equation~\ref{eq:hi_lb} and the fact that $c \le \frac{1}{\min_k w_k} \eta_t \zeta$. If we also have that $\eta_t \zeta \le (1 - \alpha)^2 (w_i^t)^\alpha$, then the last claim of Lemma~\ref{lem:renormalize_bound} gives us that
	\[(w_i^{t + 1})^{\alpha - 1} = -h_i + c \le (w_i^t)^{\alpha - 1} + \eta_t \zeta + m \eta_t \zeta = (w_i^t)^{\alpha - 1} + (m + 1)\eta_t \zeta.\]
	Similarly, we have
	\[(w_i^{t + 1})^{\alpha - 1} = -h_i + c \ge (w_i^t)^{\alpha - 1} - \frac{\eta_t \zeta}{w_i^t} - \eta_t \zeta = (w_i^t)^{\alpha - 1} - \parens{\frac{1}{w_i^t} + 1} \eta_t \zeta.\]
\end{proof}

\begin{lemma} \label{lem:renormalize_bound}
	Let $-1 < a < 0$ and $\vect{g} \in \RR^m$. There is a unique $c \in \RR$ such that $\sum_i (g_i + c)^{1/a} = 1$. Furthermore, let $\vect{v} \in \Delta^m$ and $\kappa \ge 0$. Then:
	\begin{itemize}
		\item If $g_i \le v_i^a + \kappa$ for all $i$, then $c \ge -\kappa$.
		\item If $g_i \ge v_i^a - \frac{\kappa}{v_i}$ for all $i$, then $c \le \frac{\kappa}{\min_i v_i}$.
		\begin{itemize}
			\item And if, furthermore, $\kappa \le a^2 v_i^{a + 1}$ for all $i$, then $c \le m\kappa$.
		\end{itemize}
	\end{itemize}
\end{lemma}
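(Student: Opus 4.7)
My plan has four parts matching the lemma's structure. The key structural fact is that since $-1<a<0$ we have $1/a<0$, so $x\mapsto x^{1/a}$ is continuous and strictly decreasing from $+\infty$ to $0$ on $(0,\infty)$. Consequently $F(c):=\sum_i(g_i+c)^{1/a}$ is a continuous, strictly decreasing function from $+\infty$ to $0$ on its natural domain $c>\max_i(-g_i)$, and the intermediate value theorem yields a unique $c^*$ with $F(c^*)=1$.

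For the first bullet, I would evaluate $F$ at $c=-\kappa$: either $-\kappa$ lies outside the domain and $c^*>-\kappa$ holds automatically, or the hypothesis $g_i-\kappa\le v_i^a$ combined with the decreasing $1/a$-power gives $(g_i-\kappa)^{1/a}\ge v_i$, so $F(-\kappa)\ge 1=F(c^*)$ and monotonicity of $F$ forces $c^*\ge-\kappa$. The weaker second bullet is symmetric: at $c_1:=\kappa/\min_i v_i$, the inequality $c_1\ge\kappa/v_i$ yields $g_i+c_1\ge v_i^a$, so $F(c_1)\le\sum v_i=1$ and thus $c^*\le c_1$.

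The strengthening $c^*\le m\kappa$ is the main obstacle. A term-by-term comparison fails because $v_i^a+m\kappa-\kappa/v_i$ straddles $v_i^a$ as $v_i$ crosses the threshold $1/m$, so cancellation across coordinates is essential. By the monotonicity of each summand in $g_i$, I reduce to the tight case $g_i=v_i^a-\kappa/v_i$ and define $u_i$ by $u_i^a=v_i^a+m\kappa-\kappa/v_i$; the target becomes $\sum_i u_i\le 1=\sum_i v_i$. A second-order Taylor expansion of $x\mapsto x^a$ around $v_i$ produces
\[\sum_i(u_i-v_i)=\frac{\kappa}{a}\Bigl(m\sum_i v_i^{1-a}-\sum_i v_i^{-a}\Bigr)+R,\]
whose leading term is non-positive: $1/a<0$, while the power-mean inequalities $\sum v_i^{1-a}\ge m^a$ (convexity of $x^{1-a}$ since $1-a>1$) and $\sum v_i^{-a}\le m^{1+a}$ (concavity of $x^{-a}$ on $(0,1]$ since $-a\in(0,1)$) together give $m\sum v_i^{1-a}-\sum v_i^{-a}\ge 0$. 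The hypothesis $\kappa\le a^2 v_i^{a+1}$, equivalent to $\kappa/v_i^{a+1}\le a^2$, forces $|u_i-v_i|=O(\kappa)$ uniformly in $i$, so the remainder $R$ is of order $\kappa^2$. The main technical hurdle is to show quantitatively that $R$ cannot overwhelm the first-order term; note that both sides vanish exactly when $v_i\equiv 1/m$ (where $c^*=m\kappa$ is attained), so the bound must degrade gracefully as the $v_i$ approach uniformity.
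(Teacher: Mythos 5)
Your treatment of existence and uniqueness and of the two unrefined bounds ($c \ge -\kappa$ and $c \le \kappa/\min_i v_i$) is correct and essentially the same as the paper's: monotonicity of $F(c) = \sum_i (g_i+c)^{1/a}$ plus a termwise comparison using that $x \mapsto x^{1/a}$ is decreasing. Your reduction of the refined bound to the tight case, i.e.\ to showing $\sum_i \parens{v_i^a + m\kappa - \kappa/v_i}^{1/a} \le 1$, is also valid, since lowering $g_i$ only raises the solution $c$.

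However, the proof of $c \le m\kappa$ --- the only place the hypothesis $\kappa \le a^2 v_i^{a+1}$ is used, and the crux of the lemma --- has a genuine gap, which you flag yourself. The situation is worse than ``$R$ is of order $\kappa^2$'': since $s \mapsto (v_i^a+s)^{1/a}$ is convex (both $1/a$ and $1/a-1$ are negative), the remainder $R$ is \emph{nonnegative}, so it works against you; and the favorable first-order term is not of size $\kappa$ but degenerates near the uniform point. Writing $v_i = 1/m + \epsilon_i$ with $\sum_i \epsilon_i = 0$, one computes $m v_i^{1-a} - v_i^{-a} = m^{1+a}\epsilon_i + O(\epsilon_i^2)$, so after summation the first-order term is only of order $\kappa \norm{\epsilon}^2$, while $R$ is of order $\kappa^2 \norm{\epsilon}^2$ (as $\delta_i = m\kappa - \kappa/v_i = O(\kappa\,\epsilon_i)$ there). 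The entire content of the claim is therefore the quantitative comparison of these two contributions, which is exactly where $\kappa \le a^2 v_i^{a+1}$ must enter --- and your proposal stops precisely at that point. The paper sidesteps the expansion altogether: Claim~\ref{claim:renormalize_technical} shows that $x \mapsto \parens{x^a - \frac{\kappa}{x} + c}^{1/a}$ is concave wherever $a^2 x^{a+1} \ge \kappa$ (a direct second-derivative computation in which the hypothesis appears as $d := \kappa x^{-a-1} \le a^2$), and then a single application of Jensen's inequality with $\sum_i v_i = 1$ gives $1 \le \sum_i \parens{v_i^a - \frac{\kappa}{v_i} + c}^{1/a} \le m\parens{\parens{\frac{1}{m}}^a - m\kappa + c}^{1/a}$, hence $c \le m\kappa$. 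To complete your route you would in effect have to prove a uniform second-derivative bound equivalent to that concavity statement, at which point Jensen is the cleaner way to finish.
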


\begin{proof}
	Observe that $\sum_i (g_i + c)^{1/a}$ is a continuous, monotone decreasing function on $c \in (-\min_i g_i, \infty)$; the range of the function on this interval is $(0, \infty)$. Therefore, there is a unique $c \in (-\min_i g_i, \infty)$ such that the sum equals $1$.
	
	We now prove the first bullet. Since $x^{1/a}$ decreases in $x$ and $g_i \le v_i^a + \kappa$, we have that
	\[1 = \sum_i (g_i + c)^{1/a} \ge \sum_i (v_i^a + \kappa + c)^{1/a}.\]
	Suppose for contradiction that $c < -\kappa$. Then $v_i^a + \kappa + c < v_i^a$ for all $i$, so
	\[\sum_i (v_i^a + \kappa + c)^{1/a} > \sum_i (v_i^a)^{1/a} = \sum_i v_i = 1.\]
	This is a contradiction, so in fact $c \ge -\kappa$.
	
	The first claim of the second bullet is analogous. Since $x^{1/a}$ decreases in $x$ and $g_i \ge v_i^a - {\kappa}{v_i}$, we have that
	\begin{equation} \label{eq:first_ub_ineq}
		1 = \sum_i (g_i + c)^{1/a} \le \sum_i \parens{v_i^a - \frac{\kappa}{v_i} + c}^{1/a}.
	\end{equation}
	Suppose for contradiction that $c > \frac{\kappa}{v_i}$ for every $i$. Then $v_i^a - \frac{\kappa}{v_i} + c > v_i^a$ for all $i$, so
	\[\sum_i \parens{v_i^a - \frac{\kappa}{v_i} + c}^{1/a} < \sum_i (v_i^a)^{1/a} = \sum_i v_i = 1.\]
	This is a contradiction, so in fact $c \le \frac{\kappa}{\min_i v_i}$.
	
	We now prove the second claim of the second bullet. To do so, we note the following technical lemma (proof below).
	\begin{lemma} \label{lemma:renormalize_technical}
		For $-1 < a < 0$ and $\kappa, c \ge 0$, the function $f(x) = \parens{x^a - \frac{\kappa}{x} + c}^{1/a}$ is defined and concave at any value of $x > 0$ such that $a^2 x^{a + 1} \ge \kappa$.
	\end{lemma}
	
	Since for a general concave function $f$ it holds that $\frac{1}{m} \sum_{i = 1}^m f(x_i) \le f \parens{\frac{1}{m} \sum_{i = 1}^m x_i}$, the following inequality follows from Lemma~\ref{lemma:renormalize_technical}:
	\[\sum_i \parens{v_i^a - \frac{\kappa}{v_i} + c}^{1/a} \le m \parens{\parens{\frac{1}{m}}^a - \kappa m + c}^{1/a}.\]
	(Here we are using the fact that $\sum_i v_i = 1$.) Now, combining this fact with Equation~\ref{eq:first_ub_ineq}, we have that
	\begin{align*}
		m \parens{\parens{\frac{1}{m}}^a - \kappa m + c}^{1/a} &\ge 1\\
		\parens{\frac{1}{m}}^a - \kappa m + c &\le \parens{\frac{1}{m}}^a
	\end{align*}
	so $c \le m\kappa$, as desired.
\end{proof}

\begin{proof}[Proof of Lemma~\ref{lemma:renormalize_technical}]
	To show that $f$ is defined for any $x$ such that $a^2 x^{a + 1} \ge \kappa$, we need to show that $x^a - \frac{\kappa}{x} + c > 0$ for such values of $x$. This is indeed the case:
	\[x^a - \frac{\kappa}{x} + c \ge x^a - a^2 x^a + c = (1 - a^2) x^a + c > c \ge 0.\]
	
	Now we show concavity. We have
	
	\footnotesize
	\[f''(x) = \frac{1}{-a} \parens{\parens{1 + \frac{1}{-a}} \parens{x^a - \frac{\kappa}{x} + c}^{1/a - 2} \parens{ax^{a - 1} + \frac{\kappa}{x^2}}^2 - \parens{x^a - \frac{\kappa}{x} + c}^{1/a - 1} \parens{a(a - 1) x^{a - 2} - \frac{2\kappa}{x^3}}}\]
	
	\normalsize
	so we wish to show that
	\[\parens{1 + \frac{1}{-a}} \parens{x^a - \frac{\kappa}{x} + c}^{1/a - 2} \parens{ax^{a - 1} + \frac{\kappa}{x^2}}^2 \le \parens{x^a - \frac{\kappa}{x} + c}^{1/a - 1} \parens{a(a - 1) x^{a - 2} - \frac{2\kappa}{x^3}}\]
	for every $x$ such that $a^2 x^{a + 1} \ge \kappa$. Fix any such $x$, and let $d = \frac{\kappa}{x^{a + 1}}$ (so $0 \le d \le a^2$). We have
	\begin{align*}
		d &\le a^2\\
		(1 + a)(a^2 - d)d &\ge 0\\
		(1 - a)(a + d)^2 &\le -a(1 - d)(a(a - 1) - 2d) & \text{(rearrange terms)}\\
		\parens{1 - \frac{1}{a}}(a + d)^2 x^a &\le ((1 - d) x^a)(a(a - 1) - 2d) & \text{(multiply by $\frac{x^a}{-a}$)}\\
		\parens{1 - \frac{1}{a}}(a + d)^2 x^a &\le ((1 - d) x^a + c)(a(a - 1) - 2d) & \text{($c(a(a - 1) - 2d) \ge 0$)}\\
		\parens{1 - \frac{1}{a}}((a + d) x^{a - 1})^2 &\le ((1 - d) x^a + c)(a(a - 1) - 2d) x^{a - 2} &\text{(multiply by $x^{a - 2}$)}\\
		\parens{1 - \frac{1}{a}} \parens{ax^{a - 1} + \frac{\kappa}{x^2}}^2 &\le \parens{x^a - \frac{\kappa}{x} + c} \parens{a(a - 1)x^{a - 2} - \frac{2\kappa}{x^3}} & \text{(substitute $d = \kappa x^{-a - 1}$)}.
	\end{align*}
	Note that the fifth line is justified by the fact that $c \ge 0$ and $a(a - 1) \ge 2d$ (because $a^2 \ge d$ and $-a > a^2 \ge d$). Now, multiplying both sides by $\parens{x^a - \frac{\kappa}{x} + c}^{1/a - 2}$ completes the proof.
\end{proof}

\begin{proof}[Proof of Corollary~4.5]
	Note that $\eta \gamma = \frac{1}{\sqrt{T} m^{(1 + \alpha)/2}}$ and also that $\eta_t \le \eta$ for all $t$; we will be using these facts.
	
	To prove (\#1), we proceed by induction on $t$. In the case of $t = 1$, all weights are $1/m$, so the claim holds for sufficiently large $T$. Now assume that the claim holds for a generic $t < T$; we show it for $t + 1$.
	
	By the small gradient assumption, we may use Lemma~4.4 with $\zeta = \gamma$. By the inductive hypothesis (and the fact that $\eta_t \le \eta$), we may apply the second part of Lemma~4.4:
	\begin{align*}
		(w_i^{t + 1})^{\alpha - 1} &\le (w_i^t)^{\alpha - 1} + (m + 1) \eta \gamma \le \dots \le (1/m)^{\alpha - 1} + t(m + 1) \eta \gamma.\\
		&\le (1/m)^{\alpha - 1} + \frac{(T - 1)(m + 1)}{m^{(1 + \alpha)/2} \sqrt{T}} \le 3m^{(1 - \alpha)/2} \sqrt{T}.
	\end{align*}
	Since $\frac{-1}{2} < \alpha - 1 < 0$, this means that $w_i^t \ge \frac{1}{10\sqrt{m}} T^{1/(2(\alpha - 1))}$.
	
	We also have that
	\[(w_i^{t + 1})^\alpha \ge \frac{1}{(10\sqrt{m})^\alpha} T^{\alpha/(2(\alpha - 1))} \ge \frac{4}{m^{(1 + \alpha)/2}} T^{-1/2} = 4\eta \gamma\]
	for $T$ sufficiently large, since $\frac{\alpha}{2(\alpha - 1)} > \frac{-1}{2}$. This completes the inductive step, and thus the proof of (\#1).\\
	
	To prove (\#2), we use the following technical lemma (see below for the proof).
	\begin{lemma} \label{lemma:fy}
		Fix $x > 0$ and $-1 < a < 0$. Let $f(y) = (x^a + y)^{1/a}$. Then for all $y > -x^a$, we have
		\begin{equation} \label{eq:fy_1}
			x - f(y) \le \frac{-1}{a} x^{1 - a}y
		\end{equation}
		and for all $-1 < c \le 0$, for all $c x^a \le y \le 0$, we have
		\begin{equation} \label{eq:fy_2}
			f(y) - x \le \frac{1}{a} (1 + c)^{1/a - 1} x^{1 - a}y.
		\end{equation}
	\end{lemma}
	
	We apply Equation~\ref{eq:fy_1} to $x = w_i^t$, $y = (m + 1) \eta \gamma$, and $a = \alpha - 1$. This tells us that
	\[w_i^t - w_i^{t + 1} \le w_i^t - ((w_i^t)^{\alpha - 1} + (m + 1) \eta \gamma)^{1/(\alpha - 1)} \le 2 (w_i^t)^{2 - \alpha} (m + 1) \eta \gamma.\]
	The first step follows by the second part of Lemma~4.4 and the fact that $\eta_t \le \eta$. The second step follows from Equation~\ref{eq:fy_1} and uses the fact that $\frac{1}{1 - \alpha} > 2$.
	
	For the other side of (\#2), we observe that since by (\#1) we have $(w_i^t)^\alpha \ge 4\eta \gamma$, it follows that $\frac{1}{2} (w_i^t)^\alpha \ge (w_i^t + 1) \eta \gamma$, and so $\parens{\frac{1}{w_i^t} + 1} \eta \gamma \le \frac{1}{2} (w_i^t)^{\alpha - 1}$. Therefore, we can apply Equation~\ref{eq:fy_2} to $x = w_i^t$, $y = -\parens{\frac{1}{w_i^t} + 1} \eta \gamma$, $a = \alpha - 1$, and $c = -\frac{1}{2}$. This tells us that
	\begin{align*}
		w_i^{t + 1} - w_i^t &\le \parens{(w_i^t)^{\alpha - 1} - \parens{\frac{1}{w_i^t} + 1} \eta \gamma}^{1/(\alpha - 1)} - w_i^t \le 16 (w_i^t)^{2 - \alpha} \parens{\frac{1}{w_i^t} + 1} \eta \gamma\\
		&\le 32 (w_i^t)^{1 - \alpha} \eta \gamma.
	\end{align*}
	This completes the proof.
\end{proof}

\begin{proof}[Proof of Lemma~\ref{lemma:fy}]
	For all $y > -x^a$, we have
	\[f'(y) = \frac{1}{a} (x^a + y)^{1/a - 1}\]
	and
	\[f''(y) = \frac{1}{a} \parens{\frac{1}{a} - 1} (x^a + y)^{1/a - 2} > 0,\]
	so $f'$ is increasing. Thus, for positive values of $y$ we have
	\[f'(0) \le \frac{f(y) - f(0)}{y} = \frac{f(y) - x}{y} \le f'(y)\]
	and for negative values of $y$ we have
	\[f'(y) \le \frac{f(y) - f(0)}{y} = \frac{f(y) - x}{y} \le f'(0).\]
	Regardless of whether $y$ is positive or negative, this means that $x - f(y) \le -yf'(0) = \frac{-1}{a} x^{1 - a}y$.
	
	Now, let $-1 < c \le 0$ and suppose that $c x^a \le y \le 0$. Since $f'$ is increasing, we have that
	\[f'(y) \ge f'(c x^a) = \frac{1}{a} ((1 + c)x^a)^{1/a - 1} = \frac{1}{a} (1 + c)^{1/a - 1} x^{1 - a},\]
	so
	\[f(y) - x \le y f'(y) \le \frac{1}{a} (1 + c)^{1/a - 1} x^{1 - a}y.\]
\end{proof}

\begin{proof}[Proof of Lemma~4.8]
	We first derive an expression for $\partial_i L(\vect{w})$ given expert reports $\vect{p}^1, \dots, \vect{p}^m$, where $L(\vect{w})$ is the log loss of the logarithmic pool $\vect{p}^*(\vect{w})$ of $\vect{p}^1, \dots, \vect{p}^m$ with weights $\vect{w}$, and $j$ is the realized outcome. We have\footnote{It should be noted that $\nabla L(\vect{w})$ is most naturally thought of as living in $\RR^m / T(\vect{1}_m)$, i.e.\ $m$-dimensional space modulo translation by the all-ones vector, since $\vect{w}$ lives in a place that is orthogonal to the all-ones vector. As an arbitrary but convenient convention, we define $\partial_i L(\vect{w})$ to be the specific value derived below, and define the small gradient assumption accordingly.}
	\begin{align} \label{eq:partl_l_expr}
		\partial_i L(\vect{w}) &= -\partial_i \ln \frac{\prod_{k = 1}^m (p^k_j)^{w_k}}{\sum_{\ell = 1}^n \prod_{k = 1}^m (p^k_\ell)^{w_k}} = \partial_i \ln \parens{\sum_{\ell = 1}^n \prod_{k = 1}^m (p^k_\ell)^{w_k}} - \partial_i \ln \parens{\prod_{k = 1}^m (p^k_j)^{w_k}} \nonumber\\
		&= \frac{\sum_{\ell = 1}^n \ln p^i_\ell \cdot \prod_{k = 1}^m (p^k_\ell)^{w_k}}{\sum_{\ell = 1}^n \prod_{k = 1}^m (p^k_\ell)^{w_k}} - \ln p^i_j = \sum_{\ell = 1}^n p^*_\ell(\vect{w}) \ln p^i_\ell - \ln p^i_j.
	\end{align}
	
	Equation~(2) now follows fairly straightforwardly. Equation~\ref{eq:partl_l_expr} tells us that $\partial_i L(\vect{w}) \le -\ln p^i_J$, where $J$ is the random variable corresponding to the realized outcome. Therefore, we have
	\begin{align*}
		\pr{\partial_i L(\vect{w}) \ge \zeta} &\le \pr{-\ln p_J^i \ge \zeta} = \pr{p_J^i \le e^{-\zeta}} = \sum_{j = 1}^n \pr{J = j \enskip \& \enskip p_j^i \le e^{-\zeta}}\\
		&= \sum_{j = 1}^n \pr{p_j^i \le e^{-\zeta}} \pr{J = j \mid p_j^i \le e^{-\zeta}} \le \sum_{j = 1}^n \pr{J = j \mid p_j^i \le e^{-\zeta}} \le ne^{-\zeta},
	\end{align*}
	where the last step follows by the calibration property. This proves Equation~(2).
	
	We now prove Equation~(3). The proof has a similar idea, but is somewhat more technical. We begin by proving the following lemma; we again use the calibration property in the proof.
	
	\begin{lemma} \label{lem:mnq}
		For all $q$, we have
		\[\pr{\forall j \exists i: p^i_j \le q} \le mnq.\]
	\end{lemma}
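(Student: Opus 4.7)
The plan is to bound the probability of the event $E := \{\forall j \exists i: p^i_j \le q\}$ by specializing $j$ to the realized outcome $J$. The key observation is that if $E$ holds, then in particular there must exist an expert $i$ with $p^i_J \le q$, so $E \subseteq \{\exists i : p^i_J \le q\}$. This reduction replaces an $n$-fold intersection of events with a single event coupled to the random outcome, which is precisely the situation in which the calibration property is useful.

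After this set-theoretic reduction, I would apply a union bound over the $m$ experts to obtain
\[\pr{E} \le \sum_{i=1}^m \pr{p^i_J \le q},\]
leaving us to bound $\pr{p^i_J \le q}$ for a single expert $i$. To do this, I would decompose by the value of $J$:
\[\pr{p^i_J \le q} = \sum_{j=1}^n \pr{J = j,\ p^i_j \le q},\]
and then condition on the forecast $\vect{p}^i$. Since the indicator $\mathbb{1}[p^i_j \le q]$ is measurable with respect to $\vect{p}^i$, the calibration property $\pr{J=j \mid \vect{p}^i} = p^i_j$ yields
\[\pr{J = j,\ p^i_j \le q} = \EE{p^i_j \mathbb{1}[p^i_j \le q]} \le q,\]
where the final inequality is just the fact that $p^i_j$ is at most $q$ on the support of the indicator. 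Summing over $j$ gives $\pr{p^i_J \le q} \le nq$, and a further union bound over $i$ completes the proof.

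There is no real obstacle here: the only nontrivial step is recognizing that one should not try to bound each $\bigcup_i \{p^i_j \le q\}$ uniformly over $j$ (which would be wasteful and actually lose the dependence on $q$), but rather exploit that $J$ is random and must take \emph{some} value in $[n]$. Once the reduction $E \subseteq \{\exists i: p^i_J \le q\}$ is in hand, calibration immediately converts the low-probability-assignment event into a contribution to $\EE{p^i_j}$, which the indicator forces to be at most $q$.
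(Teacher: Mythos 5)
Your proof is correct and follows essentially the same route as the paper: reduce the event to the realized outcome $J$ via $E \subseteq \{\exists i: p^i_J \le q\}$, union bound over experts and outcomes, and invoke calibration to bound each term by $q$. The only (cosmetic) differences are the order of the two union bounds and that you make the calibration step explicit through $\EE{p^i_j \mathbb{1}[p^i_j \le q]} \le q$, which is a slightly more careful rendering of the paper's $\pr{J = j \mid p^i_j \le q} \le q$.
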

	
	\begin{proof}
		Let $J$ be the random variable corresponding to the index of the outcome that ends up happening. We have
		\begin{align*}
			\pr{\forall j \exists i: p^i_j \le q} &\le \pr{\exists i: p^i_J \le q} = \sum_{j \in [n]} \pr{J = j \enskip \& \enskip \exists i: p^i_j \le q}\\
			&\le \sum_{j \in [n]} \sum_{i \in [m]} \pr{J = j \enskip \&  \enskip p^i_j \le q}\\
			&= \sum_{j \in [n]} \sum_{i \in [m]} \pr{p^i_j \le q} \pr{J = j \mid p^i_j \le q} \le \sum_{j \in [n]} \sum_{i \in [m]} 1 \cdot q = mnq,
		\end{align*}
		where the fact that $\pr{J = j \mid p^i_j \le q} \le q$ follows by the calibration property.
	\end{proof}
	
	\begin{corollary} \label{cor:mnq}
		For any reports $\vect{p}^1, \dots, \vect{p}^m$, weight vector $\vect{w}$, $i \in [m]$, and $j \in [n]$, we have
		\[\pr{p^*_j(\vect{w}) \ge \frac{(p^i_j)^{w_i}}{q}} \le mnq.\]
	\end{corollary}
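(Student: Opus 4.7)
The plan is to reduce the event $p^*_j(\vect{w}) \ge (p^i_j)^{w_i}/q$ to the event bounded in Lemma~\ref{lem:mnq}, namely $\forall j' \exists k : p^k_{j'} \le q$, so that the corollary follows by a single application of that lemma.

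First I would unpack the definition of the logarithmic pool. Writing
\[p^*_j(\vect{w}) = \frac{\prod_k (p^k_j)^{w_k}}{\sum_{j'} \prod_k (p^k_{j'})^{w_k}},\]
the claimed inequality $p^*_j(\vect{w}) \ge (p^i_j)^{w_i}/q$ becomes, after clearing denominators and cancelling $(p^i_j)^{w_i}$ (the degenerate case $p^i_j = 0$ with $w_i > 0$ forces $p^*_j(\vect{w}) = 0$ and can be treated by a limiting argument),
\[q \prod_{k \ne i} (p^k_j)^{w_k} \;\ge\; \sum_{j'} \prod_k (p^k_{j'})^{w_k}.\]
Since each $p^k_j \in [0,1]$ and $w_k \ge 0$, the left-hand side is at most $q$; hence the non-negative sum on the right is at most $q$, and in particular each individual term satisfies $\prod_k (p^k_{j'})^{w_k} \le q$ for every $j' \in [n]$.

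Next I would invoke the elementary fact that a weighted geometric mean lies above the minimum of its inputs: because $\sum_k w_k = 1$, $\prod_k (p^k_{j'})^{w_k} \ge \min_k p^k_{j'}$. Combining with the previous bound gives $\min_k p^k_{j'} \le q$ for every $j' \in [n]$, which is precisely the event $\forall j' \exists k : p^k_{j'} \le q$. Applying Lemma~\ref{lem:mnq} then yields the probability bound $mnq$.

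There is no real obstacle: the only work is the algebraic unpacking of the pool and the two elementary facts (product of numbers in $[0,1]$ is at most $1$; weighted GM dominates the minimum). The key conceptual point, inherited from Lemma~\ref{lem:mnq}, is that the calibration property is what makes $\forall j' \exists k : p^k_{j'} \le q$ unlikely, and the above reduction is precisely what lets us pass that to a statement about the pool.
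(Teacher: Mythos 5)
Your proof is correct and is essentially the paper's argument: both reduce the event to ``for every outcome some expert reports at most $q$'' and invoke Lemma~\ref{lem:mnq}; the paper just phrases the reduction contrapositively (if some outcome had all experts above $q$, the normalizing denominator would exceed $q$ and the event could not occur), which is the same use of the two elementary facts you cite. The only difference is cosmetic bookkeeping (you clear denominators and cancel $(p^i_j)^{w_i}$, the paper drops the other numerator factors), so no further comment is needed.
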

	
	\begin{proof}
		We have
		\[p^*_j(\vect{w}) = \frac{\prod_{k = 1}^m (p_j^k)^{w_k}}{\sum_{\ell = 1}^n \prod_{k = 1}^m (p_{\ell}^{k})^{w_{k}}} \le \frac{(p_j^i)^{w_i}}{\sum_{\ell = 1}^n \prod_{k = 1}^m (p_{\ell}^{k})^{w_{k}}}.\]
		Now, assuming that there is an $\ell$ such that for every $k$ we have $p^{k}_{\ell} > q$, the denominator is greater than $q$, in which case we have $p^*_j(\vect{w}) < \frac{(p^i_j)^{w_i}}{q}$. Therefore, if $p^*_j(\vect{w}) \ge \frac{(p^i_j)^{w_i}}{q}$, it follows that for every $\ell$ there is a $k$ such that $p^k_\ell \le q$. By Lemma~\ref{lem:mnq}, this happens with probability at most $mnq$.
	\end{proof}
	
	We now use Corollary~\ref{cor:mnq} to prove Equation~(3). Note that the equation is trivial for $\zeta < n$, so we assume that $\zeta \ge n$. By setting $q := e^{\frac{-\zeta}{n}}$, we may restate Equation~(3) as follows: for any $q \le \frac{1}{e}$, any $i \in [m]$, and any weight vector $\vect{w}$,
	\[\pr{\partial_i L(\vect{w}) \le -\frac{n \ln 1/q}{w_i}} \le mn^2q.\]
	(Note that the condition $q \le \frac{1}{e}$ is equivalent to $\zeta \ge n$.) We prove this result.
	
	From Equation~\ref{eq:partl_l_expr}, we have
	\[\partial_i L(\vect{w}) = \sum_{j = 1}^n p^*_j(\vect{w}) \ln p^i_j - \ln p^i_j \ge \sum_{j = 1}^n p_j(\vect{w}) \ln p^i_j.\]
	Now, it suffices to show that for each $j \in [n]$, the probability that $p_j(\vect{w}) \ln p^i_j \le - \frac{\ln 1/q}{w_i} = \frac{\ln q}{w_i}$ is at most $mnq$; the desired result will then follow by the union bound. By Corollary~\ref{cor:mnq}, for each $j$ we have that
	\[\pr{p_j(\vect{w}) \ln p_j^i \le \frac{(p_j^i)^{w_i}}{q} \ln p_j^i} \le mnq.\]
	Additionally, we know for a fact that $p_j(\vect{w}) \ln p^i_j \ge \ln p^i_j$ (since $p_j(\vect{w}) \le 1$), so in fact
	\[\pr{p_j(\vect{w}) \ln p_j^i \le \max \parens{\frac{(p_j^i)^{w_i}}{q} \ln p_j^i, \ln p_j^i}} \le mnq.\]
	It remains only to show that $\max \parens{\frac{(p_j^i)^{w_i}}{q} \ln p_j^i, \ln p_j^i} \ge \frac{\ln q}{w_i}$. If $p_j^i \ge q^{1/w_i}$ then this is clearly true, since in that case $\ln p_j^i \ge \frac{\ln q}{w_i}$. Now suppose that $p_j^i < q^{1/w_i}$. Observe that $\frac{x^{w_i}}{q} \ln x$ decreases on $(0, e^{-1/w_i})$, and that (since $q \le \frac{1}{e}$) we have $q^{1/w_i} \le e^{-1/w_i}$. Therefore,
	\[\frac{(p_j^i)^{w_i}}{q} \ln p_j^i \le \frac{(q^{1/w_i})^{w_i}}{q} \ln q^{1/w_i} = \frac{\ln q}{w_i}.\]
	This completes the proof of Equation~(3), and thus of Lemma~4.8.
\end{proof}

The following lemma lower bounds the regret of Algorithm~1 as a function of $\zeta$.

\begin{lemma} \label{lem:bad_case}
	Consider a run of Algorithm~1. Let $\zeta$ be such that $-\frac{\zeta}{w_i^t} \le \partial_i L^t(\vect{w}^t) \le \zeta$ for all $i, t$. The total regret is at most
	\[O \parens{\zeta^{2(2 - \alpha)/(1 - \alpha)} T^{(5 - \alpha)/(1 - \alpha)}}.\]
\end{lemma}

\begin{proof}[Proof of Lemma~\ref{lem:bad_case}]
	We first bound $w_i^t$ for all $i, t$. From Lemma~4.4, we have that
	\[(w_i^{t + 1})^{\alpha - 1} \le (w_i^t)^{\alpha - 1} + \parens{\frac{1}{\min_i w_i^t} + 1} \eta_t \zeta \le (w_i^t)^{\alpha - 1} + 2\zeta.\]
	Here we use that $\frac{1}{\min_i w_i} + 1 \le \frac{2}{\min_i w_i}$ and that $\eta_t \le \min_i w_i$. Therefore, we have that
	\[(w_i^t)^{\alpha - 1} \le (w_i^{t - 1})^{\alpha - 1} + 2\zeta \le \dots \le m^{1 - \alpha} + 2\zeta(t - 1) \le m^{1 - \alpha} + 2\zeta T.\]
	Thus, $w_i^t \ge (m^{1 - \alpha} + 2 \zeta T)^{1/(\alpha - 1)} \ge \Omega((\zeta T)^{1/(\alpha - 1)})$ for all $i, t$.
	
	We now use the standard regret bound for online mirror descent, see e.g.\ \cite[Theorem 6.8]{ora21}:
	
	\begin{equation} \label{eq:orabona_regret_bound}
		\text{Regret} \le \max_t \frac{B_R(\vect{u}; \vect{w}^t)}{\eta_T} + \frac{1}{2\lambda} \sum_{t = 1}^T \eta_t \norm{\nabla L^t(\vect{w}^t)}_*^2
	\end{equation}
	where $B_R(\cdot ; \cdot)$ is the Bregman divergence of with respect to $R$, $\vect{u}$ is the optimal (overall loss-minimizing) point, $\lambda$ is a constant such that $R$ is $\lambda$-strongly convex with respect to a norm of our choice over $\Delta^m$, and $\norm{\cdot}_*$ is the dual norm of the aforementioned norm.
	
	Note that for any $\vect{x} \in \Delta^m$,  we have
	\[\max_{\vect{v} \in \Delta^m} B_R(\vect{v}; \vect{x}) = \max_{\vect{v} \in \Delta^m} R(\vect{v}) - R(\vect{x}) - \nabla R(\vect{x}) \cdot (\vect{v} - \vect{x}) \le \frac{m^{1 - \alpha}}{\alpha} + (\min_i x_i)^{\alpha - 1}.\]
	In the last step, we use the fact that $-\nabla R(\vect{x}) = (x_1^{\alpha - 1}, \dots, x_m^{\alpha - 1})$ (all of these coordinates are positive), so $-\nabla R(\vect{x}) \cdot (\vect{v} - \vect{x}) \le (x_1^{\alpha - 1}, \dots, x_m^{\alpha - 1}) \cdot \vect{v}$, and that all coordinates of $\vect{v}$ are non-negative and add to $1$.
	
	Therefore, given our bound on $w_i^t$, we have that this first component of our regret bound (\ref{eq:orabona_regret_bound}) is at most
	\[\frac{1}{\eta_T} \parens{\frac{m^{1 - \alpha}}{\alpha} + m^{1 - \alpha} + 2 \zeta T} \le O \parens{\frac{\zeta T}{\eta_T}} \le O \parens{\frac{\zeta T}{(\zeta T)^{1/(\alpha - 1)}}} = O \parens{(\zeta T)^{(2 - \alpha)/(1 - \alpha)}}.\]
	
	To bound the second term, we choose to work with the $\ell_1$ norm. To show that $R$ is $\lambda$-convex it suffices to show that for all $\vect{x}, \vect{y} \in \Delta^m$ we have $(\nabla^2 R(\vect{x}) \vect{y}) \cdot \vect{y} \ge \lambda \norm{\vect{y}}^2$, where $\nabla^2 R$ is the Hessian of $R$ \cite[Lemma 14]{ss07} (see also \cite[Theorem 4.3]{ora21}). Equivalently, we wish to find a $\lambda$ such that
	\[(1 - \alpha) \sum_i x_i^{\alpha - 2} y_i^2 \ge \lambda.\]
	Since $x_i^{\alpha - 2} \ge 1$ for all $i$, the left-hand side is at least $(1 - \alpha) \sum_i y_i^2 \ge \frac{1 - \alpha}{m}$, so $\lambda = \frac{1 - \alpha}{m}$ suffices.
	
	Now, given $\vect{\theta} \in \RR^m$, we have $\norm{\vect{\theta}}_* = \max_{\vect{x}: \norm{\vect{x}} \le 1} \vect{\theta} \cdot \vect{x}$. In the case of the $\ell_1$ primal norm, the dual norm is the largest absolute component of $\vect{\theta}$. Thus, we have
	\[\norm{\nabla L^t(\vect{x}^t)}_* \le \frac{\zeta}{w_i^t} \le O \parens{\zeta(\zeta T)^{1/(1 - \alpha)}} = O \parens{\zeta^{(2 - \alpha)/(1 - \alpha)} T^{1/(1 - \alpha)}}.\]
	Since $\eta_t \le O(T^{-1/2})$, we have that the second component of our regret bound (\ref{eq:orabona_regret_bound}) is at most
	\[O \parens{T \cdot T^{-1/2} \cdot \zeta^{2(2 - \alpha)/(1 - \alpha)} T^{2/(1 - \alpha)}} \le O \parens{\zeta^{2(2 - \alpha)/(1 - \alpha)} T^{(5 - \alpha)/(1 - \alpha)}}.\]
	This component dominates our bound on the regret of the first component, in both $\zeta$ and $T$. This concludes the proof.
\end{proof}

\begin{corollary} \label{cor:general_case}
	The expected total regret of our algorithm conditional on the small gradient assumption \emph{not} holding, times the probability of this event, is at most $\tilde{O}(T^{(5 - \alpha)/(1 - \alpha) - 10})$.
\end{corollary}

\begin{proof}
	Let $Z$ be the minimum value of $\zeta$ such that $-\frac{\zeta}{w_i^t} \le \partial_i L^t(\vect{w}^t) \le \zeta$ for all $i, t$. Note that by Lemma~4.8, we have that
	\[\pr{Z \ge x} \le \sum_{i = 1}^m \sum_{t = 1}^T (mn^2 e^{-\frac{x}{n}} + ne^{-x}) \le 2m^2 n^2 T e^{-\frac{x}{n}}.\]
	Let $\mu$ be the constant hidden in the big-O of Lemma~\ref{lem:bad_case}, i.e.\ a constant (dependent on $m$, $n$, and $\alpha$) such that
	\[\text{Regret} \le \mu Z^{2(2 - \alpha)/(1 - \alpha)} T^{(5 - \alpha)/(1 - \alpha)}.\]
	Let $r(Z, T)$ be the expression on the right-hand side. The small gradient assumption not holding is equivalent to $Z > 12n \ln T$, or equivalently, $r(Z, T) > r(12n \ln T, T)$. The expected regret of our algorithm conditional on the small gradient assumption \emph{not} holding, times the probability of this event, is therefore at most the expected value of $r(Z, T)$ conditional on the value being greater than $r(12n \ln T, T)$, times this probability. This is equal to
	\begin{align*}
		&r(12n \ln T, T) \cdot \pr{Z > 12n \ln T} + \int_{x = r(12n \ln T, T)}^\infty \pr{r(Z, T) \ge x} dx\\
		&\le \sum_{k = 11}^\infty r((k + 1)n \ln T, T) \cdot \pr{Z \ge kn \ln T}\\
		&\le \sum_{k = 11}^\infty \mu \cdot ((k + 1) n \ln T)^{2(2 - \alpha)/(1 - \alpha)} T^{(5 - \alpha)/(1 - \alpha)} \cdot 2m^2 n^2 T \cdot T^{-k}\\
		&\le \sum_{k = 11}^\infty \tilde{O}(T^{1 + (5 - \alpha)/(1 - \alpha) - k}) = \tilde{O}(T^{(5 - \alpha)/(1 - \alpha) - 10}),
	\end{align*}
	as desired. (The first inequality follows by matching the first term with the $k = 11$ summand and upper-bounding the integral with subsequent summands, noting that $r((k + 1)n \ln T, T) \ge 1$.)
\end{proof}

Note that $\frac{5 - \alpha}{1 - \alpha} - 10 \le \frac{5 - 1/2}{1 - 1/2} - 10 = -1$. Therefore, the contribution to expected regret from the case that the small gradient assumption does not hold is $\tilde{O}(T^{-1})$, which is negligible. Together with Corollary~4.7 (which bounds regret under the small gradient assumption), this proves Theorem~3.2.

We now extend Theorem~3.2 by showing that the theorem holds even if experts are only \emph{approximately} calibrated.

\begin{definition}
	For $\tau \ge 1$, we say that expert $i$ is \emph{$\tau$-calibrated} if for all $\vect{p} \in \Delta^n$ and $j \in [n]$, we have that $\pr{J = j \mid \vect{p}^j = \vect{p}} \le \tau p_j$. We say that $\PP$ satisfies the \emph{$\tau$-approximate calibration property} if every expert is $\tau$-calibrated.
\end{definition}

\begin{corollary} \label{cor:approx_calib}
	For any $\tau$, Theorem~3.2 holds even if the calibration property is replaced with the $\tau$-approximate calibration property.
\end{corollary}

(Note that the $\tau$ is subsumed by the big-$O$ notation in Theorem~3.2; Corollary~\ref{cor:approx_calib} does not allow experts to be arbitrarily miscalibrated.)

Technically, Corollary~\ref{cor:approx_calib} is a corollary of the \emph{proof} of Theorem~3.2, rather than a corollary of the theorem itself.\footnote{Fun fact: the technical term for a corollary to a proof is a \emph{porism}.}

\begin{proof}
	We only used the calibration property in the proofs of Equations~(2) and (3). In the proof of Equation~(2), we used the fact that $\pr{J = j \mid p_j^i \le e^{-\zeta}} \le e^{-\zeta}$; the right-hand side now becomes $\tau e^{-\zeta}$, and so the right-hand side of Equation~(2) changes to $\tau n e^{-\zeta}$. Similarly, in the proof of Equation~(3), we use the calibration property in the proof of Lemma~\ref{lem:mnq}; the right-hand side of the lemma changes to $\tau mnq$, and correspondingly Equation~(3) changes to $\tau mn^2 e^{-\zeta/n}$.
	
	Lemma~4.8 is only used in the proof of Corollary~\ref{cor:general_case}, where $2m^2n^2T$ is replaced by $2\tau m^2 n^2 T$. Since $\tau$ is a constant, Corollary~\ref{cor:general_case} holds verbatim.
\end{proof}

\section{$\Omega(\sqrt{T})$ Lower bound} \label{appx:lower_bound}
We show that no OMD algorithm with a constant step size\footnote{While Algorithm~1 does not always have a constant step size, it does so with high probability. The examples that prove Theorem~\ref{thm:lower_bound} cause $\Omega(\sqrt{T})$ regret in the typical case, rather than causing unusually large regret in an atypical case. This makes our comparison of Algorithm~1 to this class fair.} substantially outperforms Algorithm~1.

\begin{theorem} \label{thm:lower_bound}
	For every strictly convex function $R: \Delta^m \to \RR$ that is continuously twice differentiable at its minimum, and $\eta \ge 0$, online mirror descent with regularizer $R$ and constant step size $\eta$ incurs $\Omega(\sqrt{T})$ expected regret.
\end{theorem}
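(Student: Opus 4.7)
The plan is to exhibit, for each choice of regularizer $R$ and step size $\eta$, a calibrated IID adversary (with $m = n = 2$) against which OMD incurs $\Omega(\sqrt{T})$ expected regret. Let $\vect{w}^* := \arg\min_{\Delta^2} R$ be the interior OMD starting point, at some fixed distance $\delta > 0$ from both vertices. Two adversary constructions cover the regimes $\eta \le c/\sqrt{T}$ and $\eta \ge c/\sqrt{T}$ for a small constant $c$.

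For $\eta \le c/\sqrt{T}$, pick $k \in \{1,2\}$ with $w^*_k \le 1/2$, take $\epsilon = T^{-1/4}$, and use the asymmetric adversary: at each step, expert $k$ independently reports $(1/2 + \epsilon, 1/2 - \epsilon)$ or $(1/2 - \epsilon, 1/2 + \epsilon)$ uniformly with outcome sampled from this report, while the other expert always reports $(1/2, 1/2)$. Both experts are calibrated. Via Equation~\ref{eq:partl_l_expr} one computes that $E[L^t(\vect{w})] = \ln 2 - 2\epsilon^2(2 w_k - w_k^2) + O(\epsilon^3)$, so $\vect{e}_k$ is optimal with $\Theta(\epsilon^2)$ per-step improvement over any point bounded away from it, while individual gradients have norm $O(\epsilon)$ and $E[\nabla L^t(\vect{w}^*)]$ has magnitude only $\Theta(\epsilon^2)$. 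Twice-differentiability of $R$ at $\vect{w}^*$ makes $\nabla R$ locally bi-Lipschitz, so $\norm{\vect{w}^t - \vect{w}^*} = O(\eta \norm{\sum_{s<t}\nabla L^s}) = O(\eta \epsilon^2 T + \eta \epsilon \sqrt{T}) = O(c)$. For $c < \delta$ this keeps $\vect{w}^t$ at distance $\ge \delta/2$ from $\vect{e}_k$ throughout, and the total expected regret against $\vect{e}_k$ is $\Omega(\epsilon^2 T) = \Omega(\sqrt{T})$.

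For $\eta \ge c/\sqrt{T}$, use a symmetric adversary: at each step, independently run the ``expert $1$ informed'' construction with probability $w^*_1$ and the ``expert $2$ informed'' construction with probability $w^*_2$, with $\epsilon$ a small positive constant. These mixing weights ensure $E[\nabla L^t(\vect{w}^*)]$ is zero modulo $\vect{1}$. Linearizing OMD via twice-differentiability of $R$ gives
\[\vect{u}^{t + 1} \approx (I - \eta A H_t) \vect{u}^t - \eta A \vect{g}_t^{(0)},\]
where $\vect{u}^t = \vect{w}^t - \vect{w}^*$, $A = (\nabla^2 R(\vect{w}^*))^{-1}$, $H_t = \nabla^2 L^t(\vect{w}^*)$ has norm $\Theta(\epsilon^2)$, and $\vect{g}_t^{(0)} := \nabla L^t(\vect{w}^*)$ is mean-zero with covariance $\Theta(\epsilon^2)$. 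Standard analysis of this linear stochastic recursion yields stationary variance $E\norm{\vect{u}^t}^2 = \Theta(\eta)$, so $\sum_t E\norm{\vect{u}^t}^2 = \Omega(\eta T)$. The algorithm's expected excess quadratic loss is $\tfrac{1}{2}\sum_t E[(\vect{u}^t)^\top H_t \vect{u}^t] = \Omega(\epsilon^2 \eta T) = \Omega(\sqrt{T})$, while best-in-hindsight improves over $\vect{w}^*$ by only $O(\norm{\vect{G}}^2 / \norm{B}) = O(1)$ (with $\vect{G} = \sum_t \vect{g}_t^{(0)}$ of norm $\Theta(\epsilon \sqrt{T})$ and $B = \sum_t H_t$ of norm $\Theta(\epsilon^2 T)$). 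The main technical obstacle is the stochastic-process analysis in this second case: one must use concentration to verify that $\vect{u}^t$ stays inside the linearization neighborhood throughout, and separately handle very large $\eta$ where the linear recursion would blow up, but in that regime OMD's nonlinear behavior forces oscillation between simplex vertices and incurs $\Omega(T) \gg \sqrt{T}$ regret.
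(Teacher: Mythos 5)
Your two-regime structure (split at $\eta\approx T^{-1/2}$, with $m=n=2$) matches the paper's, and your small-$\eta$ argument is a sound variant of the paper's: the paper fixes the informed expert, has it report $(0.9,0.1)$, and charges constant regret on each of the first $\Theta(1/\eta)$ steps, whereas you use a weakly informed expert ($\epsilon=T^{-1/4}$) and charge $\Theta(\epsilon^2)$ on all $T$ steps; both give $\Omega(\sqrt{T})$, and your version has the merit that both experts are exactly calibrated. One correction there: ``twice-differentiability of $R$ at $\vect{w}^*$ makes $\nabla R$ locally bi-Lipschitz'' does not follow from the hypotheses --- strict convexity together with $R$ being $C^2$ at its minimizer still allows $\nabla^2 R(\vect{w}^*)$ to be singular (think of $R$ growing like $\norm{\vect{w}-\vect{w}^*}^4$), so $(\nabla R)^{-1}$ need not be Lipschitz. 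Fortunately you only need one direction (a small dual displacement forces a small primal displacement), which follows from strict convexity and continuity of $\nabla R$ via monotonicity of the gradient plus compactness of small spheres around $\vect{w}^*$; you also need a maximal (Azuma/Doob-type) inequality so that your bound $O\parens{\eta(\epsilon^2 T+\epsilon\sqrt{T})}$ on the cumulative dual displacement holds simultaneously for all $t$.

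The large-$\eta$ regime is where you genuinely depart from the paper, and it is also where the real gaps are: the steps you defer are exactly the technical content. (i) Because $R$ is only assumed $C^2$ \emph{at its minimum}, the linearization $\vect{u}^{t+1}\approx(I-\eta A H_t)\vect{u}^t-\eta A\vect{g}_t^{(0)}$ is justified only inside a possibly tiny neighborhood of $\vect{w}^*$; just outside it the curvature of $R$ can be arbitrarily large or degenerate, so you must prove the iterates stay in that neighborhood for all $\eta$ up to a constant, or else argue separately that the iterate is parked far from $\vect{w}^*$ (in which case the claim $\mathbb{E}\norm{\vect{u}^t}^2=\Theta(\eta)$ is no longer the right description but the regret conclusion must be re-derived). (ii) The very-large-$\eta$ case is not free either: with your near-uniform reports every per-step regret is only $O(\epsilon)$, so ``oscillation between vertices gives $\Omega(T)$'' really means $\Omega(\epsilon^2 T)$ and requires showing the iterate is far from $\vect{w}^*$ on a constant fraction of steps (e.g.\ via a minimum dual step size exceeding the dual diameter of a neighborhood of $\vect{w}^*$). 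The paper avoids all of this with an \emph{adaptive} adversary: at each step the informed expert (reporting $(0.9,0.1)$, outcome drawn from that report, the other expert uninformative) is the one whose current weight is smaller. Then the expected instantaneous regret relative to the fixed comparator $w=0.5$ is $\Omega(\abs{w^t-0.5})$, while $\abs{L'(w^t)}$ bounded below by a constant and $R''\le D$ near $0.5$ force $\abs{w^{t+1}-w^t}\ge\eta/(2D)$, so at least every other step has $\abs{w^t-0.5}=\Omega(\eta)$, giving $\Omega(\eta T)=\Omega(\sqrt{T})$ with no linearization or stationary-distribution analysis. So: your first regime stands after the fix above; for the second, either carry out (i)--(ii) in full or replace the oblivious mixture with the paper's adaptive construction.
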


\begin{proof}
	Our examples will have $m = n = 2$. The space of weights is one-dimensional; let us call $w$ the weight of the first expert. We may treat $R$ as a (convex) function of $w$, and similarly for the losses at each time step. We assume that $R'(0.5) = 0$; this allows us to assume that $w_1 = 0.5$ and does not affect the proof idea.
	
	It is straightforward to check that if Experts 1 and 2 assign probabilities $p$ and $\frac{1}{2}$, respectively, to the correct outcome, then
	\[L'(w) = \frac{(1 - p)^w}{p^w + (1 - p)^w} \ln \frac{1 - p}{p}.\]
	If roles are reversed (they say $\frac{1}{2}$ and $p$ respectively) then
	\[L'(w) = -\frac{(1 - p)^{1 - w}}{p^{1 - w} + (1 - p)^{1 - w}} \ln \frac{1 - p}{p}.\]
	
	We first prove the regret bound if $\eta$ is small ($\eta \le T^{-1/2}$). Consider the following setting: Expert 1 always reports $(50\%, 50\%)$; Expert 2 always reports $(90\%, 10\%)$; and Outcome 1 happens with probability $90\%$ at each time step. It is a matter of simple computation that:
	\begin{itemize}
		\item $L'(w) \le 2$ no matter the outcome or the value of $w$.
		\item If $w \ge 0.4$, then $p^*_1(w) \le 0.8$.
	\end{itemize}
	The first point implies that $R'(w_t) \ge -2\eta t$ for all $t$. It follows from the second point that the algorithm will output weights that will result in an aggregate probability of less than $80\%$ for values of $t$ such that $-2\eta t \ge R'(0.4)$, i.e.\ for $t \le \frac{-R'(0.4)}{2\eta}$. Each of these time steps accumulates constant regret compared to the optimal weight vector in hindsight (which with high probability will be near $1$). Therefore, the expected total regret accumulated during these time steps is $\Omega(1/\eta) = \Omega(\sqrt{T})$.\\
	
	Now we consider the case in which $\eta$ is large ($\eta \ge \sqrt{T}$). In this case our example is the same as before, except we change which expert is ``ignorant" (reports $(50\%, 50\%)$ and which is ``informed" (reports $(90\%, 10\%)$). Specifically the informed expert will be the one with a lower weight (breaking ties arbitrarily).
	
	We will show that our algorithm incurs $\Omega(\eta)$ regret compared to always choosing weight $0.5$. Suppose without loss of generality that at a given time step $t$, Expert 1 is informed (so $w^t \le 0.5$). Observe that
	\begin{align*}
		L(w^t) - L(0.5)& = -(0.5 - w^t) L'(0.5) + O((0.5 - w)^2)\\
		&= -(0.5 - w^t)\frac{\sqrt{1 - p}}{\sqrt{p} + \sqrt{1 - p}} \ln \frac{1 - p}{p} + O((0.5 - w)^2),
	\end{align*}
	where $p$ is the probability that Expert 1 assigns to the event that happens (so $p = 0.9$ with probability $0.9$ and $p = 0.1$ with probability $0.1$). This expression is (up to lower order terms) equal to $c(0.5 - w^t)$ if $p = 0.9$ and $-3c(0.5 - w^t)$ if $p = 0.1$, where $c \approx 0.55$. This means that an expected regret (relative to $w = 0.5$) of $0.6c(0.5 - w^t)$ (up to lower order terms) is incurred.
	
	Let $D$ be such that $R''(w) \le D$ for all $w$ such that $\abs{w - 0.5} \le \frac{\sqrt{T}}{4D}$. (Such a $D$ exists because $R$ is continuously twice differentiable at $0.5$.) If $\abs{w^t - 0.5} \ge \frac{\sqrt{T}}{4D}$, we just showed that an expected regret (relative to $w = 0.5$) of $\Omega \parens{\frac{\sqrt{T}}{4D}}$ is incurred. On the other hand, suppose that $\abs{w^t - 0.5} \le \frac{\sqrt{T}}{4D}$. We show that $\abs{w^{t + 1} - 0.5} \ge \frac{\sqrt{T}}{4D}$.
	
	To see this, note that $\abs{L'(w^t)} \ge 0.5$, we have that $\abs{R'(w^{t + 1}) - R'(w^t)} \ge 0.5 \eta$. We also have that $D \abs{w^{t + 1} - w^t} \ge \abs{R'(w^{t + 1}) - R'(w^t)}$, so $D \abs{w^{t + 1} - w^t} \ge 0.5 \eta$. Therefore, $\abs{w^{t + 1} - w^t} \ge \frac{\eta}{2D} \ge \frac{\sqrt{T}}{2D}$, which means that $\abs{w^{t + 1} - 0.5} \ge \frac{\sqrt{T}}{4D}$.
	
	This means that an expected regret (relative to $w = 0.5$) of $\Omega \parens{\frac{\sqrt{T}}{4D}}$ is incurred on at least half of time steps. Since $D$ is a constant, it follows that a total regret of at least $\Omega(\sqrt{T})$ is incurred, as desired.
\end{proof}
\end{document}